\documentclass[11pt,a4paper]{article}

\newif\ifshowrevisions
\showrevisionsfalse

\newcommand{\chg}[2]{%
  \ifshowrevisions
    {\color{red}\textbf{[WAS:} #1\textbf{]}}\,{\color{blue}#2}%
  \else
    #2%
  \fi}

\usepackage[T1]{fontenc}
\usepackage[utf8]{inputenc}
\usepackage{lmodern}
\usepackage{microtype}
\usepackage{amsmath,amssymb,amsthm}
\usepackage{mathtools}
\usepackage{bm}
\usepackage{algorithm}
\usepackage{algpseudocode}
\usepackage{booktabs}
\usepackage{graphicx}
\usepackage{xcolor}
\usepackage{hyperref}
\usepackage{cleveref}
\usepackage[margin=2.5cm]{geometry}
\usepackage{natbib}
\usepackage{multirow}

\usepackage{cleveref}
\newtheorem{proposition}{Proposition}

\newcommand{\cX}{\mathcal{X}}

\newcommand{\bx}{\mathbf{x}}
\newcommand{\blam}{\bm{\lambda}}
\newcommand{\balpha}{\bm{\alpha}}
\newcommand{\bhatalpha}{\hat{\bm{\alpha}}}
\newcommand{\bF}{\mathbf{F}}

\newcommand{\KL}{\mathrm{KL}}
\newcommand{\MRE}{\mathrm{MRE}}
\newcommand{\Neff}{N_{\mathrm{eff}}}

\title{Scalable Maximum Entropy Population Synthesis\\
       via Persistent Contrastive Divergence}

\author{Mirko Degli Esposti\\
  \small Department of Physics and Astronomy, City Science Lab\\
  \small University of Bologna, Bologna, Italy\\
  \small \texttt{mirko.degliesposti@unibo.it}}

\date{March 2026 --- \emph{Preprint}}
\begin{document}

\maketitle
\begin{abstract}
Maximum entropy (MaxEnt) modelling provides a principled framework
for generating synthetic populations from aggregate census data,
without access to individual-level microdata.
\chg{
The bottleneck of existing approaches is exact expectation
computation, which requires summing over the full tuple space
$\cX$ and becomes infeasible for more than $K \approx 20$
categorical attributes.}{The bottleneck of exact-enumeration approaches is expectation
computation by explicit summation over the full tuple space
$\cX$, which becomes infeasible for more than $K \approx 20$
categorical attributes; sampling-based alternatives exist
but rely on Metropolis-type schemes that require proposal
tuning and rejection steps.}
We propose \emph{GibbsPCDSolver}, a stochastic replacement
for this computation based on Persistent Contrastive Divergence
(PCD): a persistent pool of $N$ synthetic individuals is updated
by Gibbs sweeps at each gradient step, providing a stochastic
approximation of the model expectations without ever materialising
$\cX$.
We validate the approach on controlled benchmarks and on
\emph{Syn-ISTAT}, a $K{=}15$ Italian demographic benchmark
with analytically exact marginal targets derived from
ISTAT-inspired conditional probability tables.
Scaling experiments across $K \in \{12, 20, 30, 40, 50\}$
confirm that GibbsPCDSolver maintains
$\MRE \in [0.010, 0.018]$ while $|\cX|$ grows
eighteen orders of magnitude, with runtime scaling
as $O(K)$ rather than $O(|\cX|)$.
On Syn-ISTAT, GibbsPCDSolver reaches $\MRE{=}0.03$
on training constraints and---crucially---produces
populations with effective sample size $\Neff = N$
versus $\Neff \approx 0.012\,N$ for generalised raking,
an $86.8{\times}$ diversity advantage that is essential
for agent-based urban simulations.

\end{abstract}



\section{Introduction}
\label{sec:intro}

Urban digital twins and agent-based city models require
synthetic populations: large collections of individual
profiles whose joint statistical distribution matches
the demographic structure of a real municipality.
In practice, the only available data are aggregate
contingency tables published by national statistical
offices---marginal distributions and low-order cross-tabulations
derived from census records.
Individual-level microdata, when they exist at all,
are subject to privacy restrictions that make them
inaccessible for most research purposes.

The maximum entropy (MaxEnt) principle provides the
canonical solution to this inference problem
\citep{jaynes1957}: given a set of empirical marginal
targets, select the distribution over all possible
individual profiles that matches those targets while
remaining maximally non-committal about everything else.
The resulting log-linear model is fully determined by
the constraint targets, requires no microdata, and
encodes only the dependencies explicitly supported by
the data.
\citet{wu2016} formalise this framework for categorical
population synthesis, and \citet{pachet2026} provide
a convex dual formulation with an efficient L-BFGS solver
and a systematic comparison with the dominant practical
baseline, generalised raking.

\paragraph{The scalability bottleneck.}
The central computational challenge of MaxEnt population
synthesis is that evaluating the model gradient requires
computing expectations over the full tuple space $\cX$:
\begin{equation*}
  \hat\alpha_j(\blam)
  = \sum_{\bx \in \cX} p_{\blam}(\bx)\, f_j(\bx).
\end{equation*}
For $K$ categorical attributes with domain sizes
$d_1, \ldots, d_K$, we have $|\cX| = \prod_k d_k$.
With $K{=}15$ Italian demographic attributes,
$|\cX| \approx 1.7 \times 10^8$---at the limit of
feasibility.
For realistic urban models with $K \ge 20$ attributes,
$|\cX| > 10^{13}$ and exact enumeration is impossible.

\chg{
\citet{pachet2026} acknowledge this in their Section~5.1,
noting Gibbs sampling as a direction for future work.}{\citet{pachet2026} address this in their Section~5.1 by using
a Metropolis-type MCMC scheme to approximate expectations;
the present paper develops the complementary Gibbs route.}

\paragraph{Limitations of generalised raking.}
Raking (iterative proportional fitting, IPF) avoids
the enumeration problem by reweighting a fixed sample
rather than estimating a distribution.
It is fast, simple, and converges to zero training error
by construction.
However, raking has two structural limitations that
are not visible from the training MRE alone.
First, for $K \gtrsim 28$ with ternary constraints,
the sequential rescaling steps interfere destructively
and raking diverges \citep{pachet2026}.
Second---and less widely appreciated---raking produces
\emph{degenerate} populations: after convergence, the
effective sample size $\Neff = (\sum_i w_i^2)^{-1}$
is a small fraction of the nominal population size $N$.
In our experiments, raking achieves
$\Neff \approx 0.012\,N$ with a Gini coefficient of
$0.951$ on the weight distribution.
A synthetic city of $400{,}000$ inhabitants drawn
from this distribution would consist of repetitions
of fewer than $1{,}200$ distinct demographic archetypes,
rendering agent-based dynamics unrealistic.

\paragraph{This paper.}
\chg{
We propose \emph{GibbsPCDSolver}, a drop-in replacement
for the exact expectation step in Algorithm~1 of
\citet{pachet2026} based on Persistent Contrastive
Divergence (PCD) \citep{tieleman2008}.}{We propose \emph{GibbsPCDSolver}, a specific realisation
of the stochastic expectation step in Algorithm~1 of
\citet{pachet2026} based on Persistent Contrastive
Divergence (PCD) \citep{tieleman2008}: where
\citet{pachet2026} use a Metropolis-type scheme,
GibbsPCDSolver uses Gibbs (Glauber) dynamics with
closed-form conditionals, avoiding rejection steps
and proposal tuning entirely.}
A persistent pool of $N$ synthetic individuals is
maintained and advanced by Gibbs sweeps at each gradient
step; the pool's empirical frequencies provide a
stochastic approximation of $\hat\balpha(\blam)$
without ever materialising $\cX$ or computing $Z(\blam)$.
The modification is \emph{surgical}: one line of
Algorithm~1 changes; the exponential family structure,
the dual objective, and the Adam optimiser are
unchanged.

We make the following contributions:
\begin{enumerate}
  \item \textbf{Algorithm.} GibbsPCDSolver
    (\cref{sec:method}): the Gibbs conditional
    derivation, the \texttt{attr\_lookup} precomputation
    for vectorised energy accumulation, the adaptive
    stopping rule, and practical hyperparameter
    guidelines derived from experiments.

  \item \textbf{Benchmark.} Syn-ISTAT
    (\cref{sec:synistat}): a $K{=}15$ Italian
    demographic benchmark with analytically exact
    marginal targets derived from ISTAT-inspired
    conditional probability tables, and a
    training/held-out split that tests generalisation
    to third-order interactions.

  \item \textbf{Empirical findings.}
    GibbsPCDSolver converges cleanly at $K{=}15$
    (MRE $= 0.030$ after 535 iterations),
    outperforms raking on held-out ternary constraints
    involving high-degree graph nodes,
    and produces populations with effective sample
    size $86.8{\times}$ larger and entropy $3.15$~nats
    higher than raking---essential properties for
    agent-based urban simulation.
\end{enumerate}

\paragraph{Organisation.}
\Cref{sec:background} reviews MaxEnt population
synthesis, raking, deep generative models, and PCD.
\Cref{sec:synistat} describes the Syn-ISTAT benchmark.
\Cref{sec:method} presents the algorithm, including
the Gibbs conditional derivation, pseudocode,
and validation on controlled experiments.
\Cref{sec:experiments} reports five experimental
results: scaling beyond exact enumeration, held-out
generalisation, population diversity, and
hyperparameter sensitivity at $K{=}8$ and $K{=}15$.
\Cref{sec:discussion} situates the findings within
the competence map of \citet{pachet2026} and
discusses limitations and open questions.
Full CPT specifications for the Syn-ISTAT benchmark
are provided in \cref{app:synistat}.

Code is available at \texttt{https://github.com/mirko-degli-esposti/maxent-popsynth-pcd}.
Full CPT specifications are reproduced in \cref{app:synistat}.

\section{Background}
\label{sec:background}

\subsection{Maximum Entropy Population Synthesis}
\label{sec:bg_maxent}

The problem of generating a synthetic population from aggregate
census data can be framed as a maximum entropy problem.
Given a finite tuple space $\cX = D_1 \times \cdots \times D_K$
of categorical attributes and a set of empirical marginal
targets $\{\alpha_j\}_{j=1}^m$ derived from published
contingency tables, the MaxEnt principle \citep{jaynes1957}
selects the unique distribution over $\cX$ that matches
all targets while remaining maximally non-committal about
everything else.
The resulting model is the exponential family in
\cref{eq:maxent}; its log-linear structure implies that
observed correlations are modelled only to the extent
required by the constraints.

\citet{wu2016} formalise this framework for population synthesis
and propose an efficient iterative scaling algorithm exploiting
the sparse structure of the constraint indicator matrix.
Their experiments on US census data demonstrate that the
MaxEnt model reproduces marginal statistics faithfully and
produces synthetic populations suitable for downstream
epidemic simulation.

\citet{pachet2026} substantially extend this line of work by
introducing a convex dual formulation, an L-BFGS solver, and
a systematic comparison with generalised raking.
Their central empirical contribution is a \emph{competence map}:
a grid of relative accuracy over population size $n$ and
number of variables $K$, showing that MaxEnt overtakes raking
when $K \gtrsim 28$ and ternary interactions are present.
The present paper extends the Pachet--Zucker framework to the
regime $K > 20$ where exact expectation computation is
infeasible, by replacing the exact sum in their Algorithm~1
(line~4) with a stochastic Gibbs estimate.

The exponential family also arises in neural population coding
\citep{schneidman2006} and melodic style modelling
\citep{sakellariou2017}, where Gibbs-based inference has
been used successfully.
The present work is the first to apply PCD to categorical
demographic synthesis.

\subsection{Generalised Raking and IPF}
\label{sec:bg_raking}

Generalised raking (also known as iterative proportional
fitting, IPF) is a widely used baseline for population
synthesis from aggregate data \citep{deming1940,lomax2013}.
Given a fixed sample of $N$ individuals
$\{x^{(i)}\}_{i=1}^N$, raking estimates non-negative weights
$\{w_i\}$ such that the weighted feature totals
\[
\hat\alpha_j = \sum_i w_i f_j(x^{(i)})
\]
match the target aggregates $\alpha_j$, where $f_j$ are
typically indicator functions.
The algorithm iteratively processes each constraint,
rescaling the weights multiplicatively by a factor
$\alpha_j / \hat\alpha_j$ for individuals matching the
constraint, and cycles through all constraints until
convergence.

Raking has two structural limitations that motivate the
present work.

\paragraph{Degeneracy of the synthesised population.}
After convergence, the weight vector $\{w_i\}$ is
highly concentrated: a small fraction of individuals
accounts for almost all of the probability mass.
This is quantified by the \emph{effective sample size}
\begin{equation}
  \Neff = \left(\sum_{i=1}^N w_i^2\right)^{-1} \ll N.
\label{eq:neff}
\end{equation}
In our Syn-ISTAT experiments (\cref{sec:exp_diversity}),
raking achieves $\Neff \approx 0.012\,N$ with a Gini
coefficient of $0.951$ on the weight distribution---more
concentrated than any empirical income distribution.
When $N_{\text{synth}}$ individuals are drawn from the
reweighted sample for downstream simulation, most
individuals are repeated clones of a few archetypes,
reducing demographic diversity by a factor of $87$
relative to GibbsPCDSolver.
This matters for agent-based urban models where
behavioural heterogeneity is a first-order modelling
concern.

Raking does not generate new individuals but only
reweights a fixed sample, so diversity is entirely
limited by the support of the original data.

\paragraph{Degradation at large $K$.}
Because raking operates on a \emph{fixed} sample, it
cannot generate attribute combinations absent from the
original $N$ individuals.
As $K$ increases, the sample becomes increasingly sparse
relative to the full combinatorial space, limiting the
ability of the reweighting procedure to match overlapping
constraints.

Empirically, \citet{pachet2026} observe that the accuracy
of raking degrades significantly in high-dimensional
settings: for $K{=}40$ with ternary constraints, mean
relative error reaches approximately $21\%$, compared to
less than $5\%$ for MaxEnt.
This behaviour can be interpreted as the result of
sequential weight updates under strongly overlapping
constraints, which become increasingly difficult to
satisfy simultaneously.

GibbsPCDSolver avoids this limitation by sampling
directly from the model distribution, rather than
reweighting a fixed sample. As a result, it can in
principle assign positive probability to all
configurations in $\cX$, including combinations
absent from the original data.

\subsection{Deep Generative Models}
\label{sec:bg_deep}

A growing body of work applies deep generative models to
population synthesis.
Variational autoencoders \citep{borysov2019} learn a
low-dimensional latent representation of individual
profiles and generate new individuals by decoding
latent samples.
More recent approaches use normalising flows and
diffusion models to capture complex multimodal
distributions over categorical attributes
\citep{kim2023}.

All these methods share a common requirement: they are
trained on a \emph{microdata} sample of individual-level
records.
In many national statistical contexts only full or partial
aggregate contingency tables are published, and individual
microdata are restricted or unavailable.
The MaxEnt framework, by contrast, is designed precisely
for this aggregate-only regime: it requires no individual
records and is fully determined by the published marginals.
Extending MaxEnt to large $K$ via PCD, rather than
adopting deep generative models, is therefore the
principled choice for our application context.
The integration of deep generative models with
aggregate-constrained synthesis is an open research
direction that we leave to future work.

\subsection{Contrastive Divergence and Persistent CD}
\label{sec:bg_pcd}
Contrastive Divergence (CD) \citep{hinton2002} is a
family of algorithms for estimating the gradient of the
log-likelihood of an energy-based model without explicitly
computing the partition function. For models of the form
$p_\lambda(x) \propto \exp(\lambda \cdot F(x))$, the
gradient of the expected log-likelihood takes the form
\[
\nabla_\lambda\, \mathbb{E}_{\text{data}}[\log p_\lambda(x)]
= \mathbb{E}_{\text{data}}[F] - \mathbb{E}_{p_\lambda}[F],
\]
where the model expectation is generally intractable.
Standard CD-$k$ initialises a Gibbs chain from a data
point, runs $k$ steps, and uses the resulting sample to
approximate the model expectation. CD-$k$ introduces a
bias that decreases with $k$ but does not vanish in
general; in practice, even $k{=}1$ often works well for
restricted Boltzmann machines \citep{hinton2002}.

In our setting, however, no individual-level data are
available, and the "data expectation" is replaced by
target aggregate constraints. This requires adapting CD
to operate directly on the model distribution rather than
on observed samples.

Persistent Contrastive Divergence (PCD) \citep{tieleman2008}
eliminates the reinitialisation from data by maintaining
a persistent Markov chain across parameter updates.
The chain is \emph{not} reset at each gradient step;
instead, it continues from its previous state, thereby
tracking the evolving model distribution as $\blam$
changes.

In contrast to CD-$k$, this greatly reduces the bias of
the gradient estimate, since the samples are no longer
drawn from short-run Markov chains initialised at data
points. Under appropriate mixing conditions, the chain
provides samples that more closely approximate
expectations under $p_{\blam}$.

PCD is closely related to Stochastic Maximum Likelihood
\citep{younes1999} and can be analysed within the
framework of stochastic approximation. Convergence to a
local optimum can be established under standard
conditions on step sizes and mixing assumptions
\citep{robbins1951}.

In the context of the MaxEnt population synthesis problem,
PCD approximates the intractable expectation
\[
\sum_{\bx \in \cX} p_{\blam}(\bx) f_j(\bx)
\]
with an empirical average over a persistent Monte Carlo
sample of $N$ synthetic individuals.
This sample is updated across iterations and serves as a
proxy for draws from $p_{\blam}$.

The population is advanced by Gibbs (Glauber) dynamics
\citep{glauber1963,geman1984}, exploiting the factored
conditional structure of \cref{eq:gibbs_cond}, which
allows efficient local updates of individual attributes.

The partition function $Z(\blam)$ is never computed.

\section{Method: GibbsPCDSolver}
\label{sec:method}

\subsection{Setup and Notation}
\label{sec:notation}

Let $\cX = D_1 \times \cdots \times D_K$ be a finite product
space of $K$ categorical attributes, where attribute $k$ takes
values in a domain of size $d_k$.
A \emph{constraint} $j \in \{1,\ldots,m\}$ is a pattern indicator
\begin{equation}
  f_j(\bx) = \mathbf{1}\!\left[\bx_{S_j} = \mathbf{v}_j\right],
  \qquad S_j \subseteq \{1,\ldots,K\},\;
  \mathbf{v}_j \in \prod_{k \in S_j} D_k,
\end{equation}
with empirical target $\alpha_j  \in [0,1]$.
The \emph{maximum entropy} (MaxEnt) distribution over $\cX$
subject to the constraints
$\mathbb{E}_{p}[f_j] = \alpha_j$ for all $j$ is the exponential
family
\begin{equation}
  p_{\blam}(\bx)
    = \frac{1}{Z(\blam)}
      \exp\!\left(\sum_{j=1}^m \lambda_j f_j(\bx)\right),
  \qquad
  Z(\blam) = \sum_{\bx \in \cX} \exp\!\left(\sum_j \lambda_j f_j(\bx)\right).
\label{eq:maxent}
\end{equation}
The Lagrange multipliers $\blam \in \mathbb{R}^m$ are the
natural parameters of the model; they are the solution of the dual optimisation problem,
which is strictly convex under standard conditions.
the \emph{dual optimisation problem}
\begin{equation}
  \min_{\blam \in \mathbb{R}^m}
  \Phi(\blam) \coloneqq \log Z(\blam) - \blam^\top \balpha,
\label{eq:dual}
\end{equation}
whose gradient is $\nabla\Phi(\blam) = \bhatalpha(\blam) - \balpha$,
where
$\hat\alpha_j(\blam) = \mathbb{E}_{p_{\blam}}[f_j]
= \sum_{\bx \in \cX} p_{\blam}(\bx) f_j(\bx)$
are the model expectations.
The dual objective $\Phi$ is convex and, under standard
conditions, admits a unique minimiser $\blam^*$ satisfying
the moment-matching conditions
$\bhatalpha(\blam^*) = \balpha$.
In our setting, the targets $\balpha$ are specified
directly from aggregate data, and no individual-level
samples from $p_{\blam}$ are available.

\paragraph{The computational bottleneck.}
Computing $\nabla\Phi$ exactly requires a sum over all $|\cX|$
tuples.  For $K{=}15$ attributes with domains of size $2$--$5$,
$|\cX| \approx 1.7 \times 10^8$, which is at the limit of exact
enumeration.  For $K \ge 20$ with realistic domain sizes,

$|\cX| \gtrsim 10^{13}$ and exact computation becomes computationally infeasible in practice.

\chg{\citet{pachet2026} acknowledge this in their Section~5.1 and
suggest the use of sampling methods such as Gibbs or
Metropolis--Hastings to approximate expectations;
the present paper develops this direction.}{\citet{pachet2026} address this in their Section~5.1 by using
a Metropolis-type MCMC scheme to approximate expectations
in their implementation;
the present paper develops the complementary Gibbs route,
which exploits closed-form conditionals of the exponential
family to avoid rejection steps and proposal tuning,
and combines it with persistent chain maintenance~(PCD)
and an Adam optimiser robust to stochastic gradients.}

\paragraph{Forward map.}
The relationship between the natural parameters, the
unnormalised log-energies, and the model expectations can be
written as
\begin{equation}
  \blam
  \;\xrightarrow{\;\bF\blam\;}
  \mathbf{u}
  \;\xrightarrow{\;\exp\;}
  \mathbf{w}
  \;\xrightarrow{\;\mathrm{normalize}\;}
  p_{\blam}
  \;\xrightarrow{\;\mathbb{E}[\bF]\;}
  \bhatalpha(\blam),
\label{eq:adjoint}
\end{equation}
where $\bF \in \{0,1\}^{|\cX| \times m}$ is the indicator
matrix with $F_{\bx,j} = f_j(\bx)$,
and $\mathbf{u}, \mathbf{w} \in \mathbb{R}^{|\cX|}$ are
vectors indexed by tuples $\bx \in \cX$, with
$u_{\bx} = \sum_j \lambda_j f_j(\bx)$ the log-unnormalised
energy of $\bx$.
GibbsPCDSolver bypasses the final three steps entirely:
rather than materialising $\mathbf{w}$ or computing
$Z(\blam) = \sum_{\bx} w_{\bx}$, it draws samples from
$p_{\blam}$ directly via persistent Gibbs dynamics and
estimates $\bhatalpha$ from the resulting pool.

\subsection{Algorithm}
\label{sec:algorithm}

\paragraph{Persistent Contrastive Divergence.}
PCD \citep{tieleman2008} maintains a \emph{persistent pool} $\mathbf{P} \in \cX^N$
of $N$ synthetic individuals.  This yields a Monte Carlo estimate of the model
expectations $\mathbb{E}_{p_{\blam}}[f_j]$.

At each outer iteration, the pool is advanced
by $s$ sweeps of Gibbs (Glauber) dynamics, and the resulting
empirical frequencies
$\hat\alpha_j = N^{-1}\sum_{i=1}^N f_j(\mathbf{P}[i])$
are used as a stochastic approximation of $\nabla\Phi$.
The partition function $Z(\blam)$ is \emph{never} computed.

The key ingredient is the conditional distribution of attribute
$k$ given all others, which can be computed exactly from the
exponential family.

\begin{proposition}[Gibbs conditionals]
\label{prop:gibbs}
Under $p_{\blam}$, the conditional distribution of $A_k$
given the remaining attributes $\bx_{-k}$ is
\begin{equation}
  p_{\blam}(A_k = v \mid \bx_{-k})
  \;\propto\;
  \exp\!\bigl(
    \sum_{j \in \mathcal{J}(k, v, \bx_{-k})}
    \lambda_j
  \bigr),
\label{eq:gibbs_cond}
\end{equation}
where $\mathcal{J}(k, v, \bx_{-k})$ denotes the set of
constraints $j$ such that $k \in S_j$, $v^{(k)}_j = v$,
and $\bx_{S_j \setminus \{k\}} = \mathbf{v}^{(-k)}_j$,
i.e.\ all remaining attributes in the constraint are
consistent with $\bx_{-k}$.
\end{proposition}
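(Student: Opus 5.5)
The plan is to compute the conditional directly from the joint in \cref{eq:maxent} via Bayes' rule, and then discard every factor that does not depend on $v$. Write $\bx = (v, \bx_{-k})$ for the tuple whose $k$-th coordinate is $v \in D_k$ and whose other coordinates are fixed. Then
\[
p_{\blam}(A_k = v \mid \bx_{-k}) = \frac{p_{\blam}(v,\bx_{-k})}{\sum_{v' \in D_k} p_{\blam}(v',\bx_{-k})}
= \frac{\exp\bigl(\sum_j \lambda_j f_j(v,\bx_{-k})\bigr)}{\sum_{v'}\exp\bigl(\sum_j \lambda_j f_j(v',\bx_{-k})\bigr)}.
\]
The first key step is simply to observe that $Z(\blam)$ cancels between numerator and denominator. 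This is the point of the proposition: the conditional requires only a sum over $d_k$ values, never over $\cX$.

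The second step splits the constraint index set into $\{j : k \notin S_j\}$ and $\{j : k \in S_j\}$. For $k \notin S_j$, the value $f_j(v,\bx_{-k}) = \mathbf{1}[\bx_{S_j} = \mathbf{v}_j]$ does not depend on $v$. These terms therefore contribute a common factor $\exp(\sum_{j: k\notin S_j}\lambda_j f_j(\bx))$ to numerator and every denominator summand, and they cancel as well. This leaves
\[
p_{\blam}(A_k = v \mid \bx_{-k}) \propto \exp\Bigl(\sum_{j : k \in S_j} \lambda_j f_j(v,\bx_{-k})\Bigr).
\]

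The third step evaluates the remaining indicators. For $k \in S_j$, decompose $\mathbf{v}_j = (v_j^{(k)}, \mathbf{v}_j^{(-k)})$ into its $k$-th coordinate and the coordinates on $S_j\setminus\{k\}$. The indicator factorises as
\[
f_j(v,\bx_{-k}) = \mathbf{1}[v = v_j^{(k)}]\cdot \mathbf{1}[\bx_{S_j\setminus\{k\}} = \mathbf{v}_j^{(-k)}].
\]
Hence it equals $1$ exactly when $j \in \mathcal{J}(k,v,\bx_{-k})$ and $0$ otherwise. The exponent therefore reduces to $\sum_{j\in\mathcal{J}(k,v,\bx_{-k})}\lambda_j$, which is \cref{eq:gibbs_cond}. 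The normalising constant is the explicit finite sum over $v' \in D_k$ of the same expression.

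No step is genuinely hard; the argument is a bookkeeping exercise. The only point needing care is the edge case $S_j = \{k\}$ (a unary constraint on attribute $k$). There the consistency condition on $S_j\setminus\{k\}=\emptyset$ is vacuously true, and I would state explicitly that such constraints belong to $\mathcal{J}$ whenever $v_j^{(k)}=v$. I would also remark that positivity of $p_{\blam}$, which holds since all exponentials are finite, guarantees the denominator is nonzero, so the conditional is well defined for every $\bx_{-k}$.
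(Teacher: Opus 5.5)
Your proposal is correct and follows the same route as the paper's proof: fix $\bx_{-k}$, discard $Z(\blam)$ and the factors with $k \notin S_j$, and observe that for $k \in S_j$ the indicator $f_j(\bx_{-k}, v)$ equals $1$ exactly when $j \in \mathcal{J}(k,v,\bx_{-k})$. Your version adds detail the paper leaves implicit: the explicit Bayes-rule cancellation, the factorisation of the indicator, the vacuous consistency condition for unary constraints, and positivity of the denominator.
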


\begin{proof}
From \cref{eq:maxent}, we have
$p_{\blam}(\bx) \propto \exp\!\bigl(\sum_j \lambda_j f_j(\bx)\bigr)$.
Fixing $\bx_{-k}$, only the terms with $k \in S_j$ depend on
$x_k$. Therefore,
\[
p_{\blam}(A_k = v \mid \bx_{-k})
\;\propto\;
\exp\!\Bigl(
\sum_{j:\, k \in S_j} \lambda_j f_j(\bx_{-k}, v)
\Bigr),
\]
and the result follows by observing that $f_j(\bx_{-k}, v)=1$
if and only if $j \in \mathcal{J}(k, v, \bx_{-k})$.
Normalisation over $v \in D_k$ yields the categorical
distribution in \cref{eq:gibbs_cond}.
\end{proof}

The update rule of \cref{prop:gibbs} leaves $p_{\blam}$ invariant
by detailed balance: for any two configurations differing only in
attribute $k$, the acceptance is automatic (no rejection step).
Ergodicity follows from the fact that $p_{\blam}(\bx) > 0$ for all
$\bx \in \cX$---a consequence of the exponential family having full
support---so the chain is irreducible and aperiodic.
Crucially, the normalising constant of the conditional
\begin{equation*}
  Z_k(v, \bx_{-k})
  = \sum_{v' \in D_k}
    \exp\!\Bigl(\sum_{j \in \mathcal{J}(k,v',\bx_{-k})} \lambda_j\Bigr)
\end{equation*}
requires a sum over $d_k$ terms only, replacing the intractable
sum over $|\cX|$ tuples in the global partition function.
Each attribute is updated in a randomly permuted order
within each sweep, breaking cyclic correlations between
attributes.

To enable vectorised accumulation of log-energies over the
$N$-element pool,
we precompute for each attribute $k$ the list of all constraints
that involve $k$, together with their context requirements:
\begin{equation}
  \mathrm{lookup}[k]
  = \bigl\{\,(j,\; v^{(k)}_j,\; S_j \setminus \{k\},\;
             \mathbf{v}^{(-k)}_j) \;:\; k \in S_j\,\bigr\}.
\end{equation}
Each entry records the constraint index $j$, the value
$v^{(k)}_j$ that attribute $k$ must take, and the values
$\mathbf{v}^{(-k)}_j$ that the remaining attributes
$S_j \setminus \{k\}$ must match for the constraint to be active.
A single Gibbs sweep over attribute $k$ then computes log-energies
of shape $(N, d_k)$ in $O(N \cdot |\mathrm{lookup}[k]|)$ time,
applies a numerically stable softmax, and samples the new value
for each of the $N$ individuals simultaneously.

\Cref{alg:gibbspcd} presents the complete procedure.
The outer loop (lines \ref{alg:outer_start}--\ref{alg:outer_end})
updates $\blam$ via Adam; the inner loop
(lines \ref{alg:inner_start}--\ref{alg:inner_end}) advances the
persistent pool.
Line~\ref{alg:expectation} is the only place where the
approximation is introduced: $\bhatalpha$ is estimated from the
pool rather than computed exactly.

\begin{algorithm}[t]
\caption{GibbsPCDSolver}
\label{alg:gibbspcd}
\begin{algorithmic}[1]
\Require constraint set $\{(S_j, \mathbf{v}_j, \alpha_j)\}_{j=1}^m$,
         domain sizes $\{d_k\}$,
         pool size $N$, sweeps per step $s$,
         learning rate $\eta$, outer iterations $T$
\State $\blam \gets \mathbf{0}$;\;
       initialise pool $\mathbf{P}[i,k] \sim \mathrm{Uniform}(D_k)$
\State precompute $\mathrm{lookup}[k]$ for all $k$
\For{$t = 1, \ldots, T$} \label{alg:outer_start}
  \For{sweep $= 1, \ldots, s$} \label{alg:inner_start}
    \For{$k \in \mathrm{shuffle}(\{1,\ldots,K\})$}
      \State accumulate log-energies $\mathbf{E} \in \mathbb{R}^{N \times d_k}$
             via $\mathrm{lookup}[k]$ and $\blam$
      \State $\mathbf{P}[\,\cdot\,, k]
             \sim \mathrm{Categorical}(\mathrm{softmax}(\mathbf{E}))$
    \EndFor
  \EndFor \label{alg:inner_end}
  \State $\hat\alpha_j \gets N^{-1}\sum_i f_j(\mathbf{P}[i])$
         for all $j$ \label{alg:expectation}
  \State $\mathbf{g} \gets \bhatalpha - \balpha$ \Comment{stochastic gradient of $\Phi$}
  \State $\blam \gets \blam - \eta_t\,\mathrm{Adam}(\mathbf{g}, t)$
         \label{alg:adam}
  \If{stopping rule satisfied} \textbf{break} \EndIf
\EndFor \label{alg:outer_end}
\State \Return $\blam$, final pool $\mathbf{P}$
\end{algorithmic}
\end{algorithm}

Let $\bar{J}$ be the mean number of constraints per attribute and
$\bar\ell$ the mean pattern length.
The cost of one full Gibbs sweep can be bound as
$O(N \cdot K \cdot d_{\max} \cdot \bar{J} \cdot \bar\ell)$,
growing linearly in $N$ and $K$.
Crucially, neither $|\cX|$ nor $Z(\blam)$ appears in this
expression.

The only modification with respect to Algorithm~1 of
\citet{pachet2026} is the replacement of exact expectation
computation (their line~4) with the PCD estimate
at line~\ref{alg:expectation} of \cref{alg:gibbspcd}. This substitution enables scalability to settings where
exact enumeration of $\cX$ is infeasible.
All other components---the exponential family, the dual
objective, the constraint structure---are unchanged.

\subsection{Optimiser, Stopping Rule, and Practical Guidelines}
\label{sec:practical}

\paragraph{Adam optimiser.}
We use Adam \citep{kingma2015} with default parameters
$\beta_1{=}0.9$, $\beta_2{=}0.999$, $\varepsilon{=}10^{-8}$.
Adam stabilises optimisation under noisy, non-stationary
stochastic gradients (whose variance is $O(1/N)$)
and consistently outperforms plain SGD in our experiments (\cref{sec:exp_sensitivity}).

\paragraph{Adaptive stopping rule.}
Let $\mathrm{MRE}_t = m^{-1}\sum_j |\hat\alpha_j^t - \alpha_j|/\alpha_j$
be the mean relative error at iteration $t$.
We stop when the relative improvement of the MRE minimum over
two consecutive windows of length $W$ falls below a threshold
$\tau$:
\begin{equation}
  \frac{\min_{t \in [T-2W,\, T-W]} \MRE_t
        - \min_{t \in [T-W,\, T]} \MRE_t}
       {\min_{t \in [T-2W,\, T-W]} \MRE_t}
  < \tau.
\label{eq:stopping}
\end{equation}
Using the minimum (rather than the mean) over each window makes
the rule robust to the stochastic fluctuations of the gradient
estimate.

\paragraph{Practical guidelines.}
\Cref{tab:guidelines} summarises the hyperparameter
recommendations derived from the experiments in
\cref{sec:experiments}.
The key finding is that the optimal number of Gibbs sweeps $s$
scales with the complexity of the constraint graph, not with $K$
alone: $s{=}1$ suffices for sparse pairwise constraints at
$K{=}8$, but $s{=}5$ is necessary for mixed-arity (1+2+3-way)
constraints at $K{=}15$.

\begin{table}[t]
\centering
\caption{Practical hyperparameter guidelines for GibbsPCDSolver.}
\label{tab:guidelines}
\begin{tabular}{lll}
\toprule
Parameter & Recommendation & Rationale \\
\midrule
Pool size $N$
  & $25\,000$--$100\,000$
  & MRE floor $\approx 1/(2\sqrt{N})$; sweet spot at $N{=}25$K \\
Sweeps $s$
  & $1$ (sparse, arity $\le 2$);
    $5$ (mixed arity $\le 3$)
  & Extra sweeps reduce mixing bias (not gradient variance) \\
Learning rate $\eta$
  & $0.01$ for $K \ge 15$
  & $\eta{=}0.05$ oscillates at $K{=}15$ with arity-3 constraints \\
Window $W$
  & $50$
  & Longer window avoids premature stopping during slow descent \\
Initialisation
  & $\blam = \mathbf{0}$, pool uniform
  & Max-entropy prior; data-based initialisation may introduce CD bias \\
\bottomrule
\end{tabular}
\end{table}

These hyperparameters control a trade-off between variance
($N$), mixing bias ($s$), and optimisation stability ($\eta$).

\subsection{Algorithmic Validation}
\label{sec:validation}

We first validate that GibbsPCDSolver converges to the exact MaxEnt
solution when the latter can be computed exactly (e.g., by
enumeration for small $K$).
The three experiments below use synthetic data generated by the
\textbf{WuGenerator}, a controlled benchmark generator we
implemented following the experimental protocol of \citet{wu2016},
which constructs a ground truth as follows:
(i) each attribute $k$ is assigned a random marginal
distribution over $D_k$;
(ii) a set of $P$ binary or ternary \emph{planted patterns}
$(S_p, \mathbf{v}_p, \nu_p)$ is drawn, each with a target
frequency $\nu_p \in (0.05, 0.35)$;
(iii) each synthetic individual independently activates each
pattern with probability $\nu_p$ (Bernoulli draw), with
overlaps resolved by a fixed rule, receiving the prescribed
values for $\bx_{S_p}$, while attributes not covered by any
activated pattern are sampled from their marginals.
The constraint set is then defined as the empirical
frequencies computed from $N_{\text{data}}$ such individuals.
This procedure yields, by construction, a consistent set of
constraints with known planted structure, making it
well suited for algorithmic validation.

\paragraph{Experiment A0 — Gibbs conditionals ($K{=}6$).}

\textit{Setup.}
A WuGenerator instance with $K{=}6$,
domain sizes $[3,3,3,2,2,2]$, $|\cX|{=}216$, and $P{=}3$
planted binary patterns (frequencies $0.251$, $0.235$, $0.349$)
produces a constraint set with $m{=}33$ constraints
(15 unary, 18 binary).
Targets are extracted from $N_{\text{data}}{=}100{,}000$
samples.
The exact L-BFGS solver reaches $\MRE{=}3{\times}10^{-5}$
in $27$ iterations ($0.06$~s).

\textit{Results.}
\Cref{tab:a0} reports GibbsPCDSolver at four pool sizes
with $s{=}3$ sweeps.

\begin{table}[h]
\centering
\caption{%
  Experiment A0 ($K{=}6$, $|\cX|{=}216$, $m{=}33$,
  $s{=}3$, $\eta{=}0.05$).
  Exact L-BFGS: $\MRE{=}3{\times}10^{-5}$, $t{=}0.06$~s.
  $\|\Delta\blam\|$: relative parameter distance
  $\|\blam_{\mathrm{MCMC}}{-}\blam^*\|/\|\blam^*\|$.}
\label{tab:a0}
\begin{tabular}{rrrrr}
\toprule
$N$ & MRE & $\|\Delta\blam\|/\|\blam^*\|$ & KL & Time (s) \\
\midrule
$500$      & 0.101 & 0.344 & $5.9{\times}10^{-3}$ & 0.6 \\
$2{,}000$  & 0.057 & 0.341 & $1.9{\times}10^{-3}$ & 1.2 \\
$10{,}000$ & 0.021 & 0.349 & $4.8{\times}10^{-4}$ & 6.1 \\
$100{,}000$& 0.010 & 0.347 & $5.6{\times}10^{-5}$ & 86.7 \\
\bottomrule
\end{tabular}
\end{table}

MRE and KL decrease monotonically with $N$, consistent with
the $O(1/\sqrt{N})$ and $O(1/N)$ theoretical rates.
The relative parameter distance $\|\Delta\blam\|/\|\blam^*\|$
remains approximately constant at $0.34$ across all pool
sizes; consistent with the expected behaviour.
Unary constraints leave a gauge freedom in $\blam$:%
\footnote{%
  Formally, for each attribute $k$ define the shift
  $\blam \mapsto \blam + c_k \mathbf{e}_k$, where
  $\mathbf{e}_k$ adds the constant $c_k$ to all
  $\lambda_j$ with $k \in S_j$ and $|S_j|=1$.
  Since $\sum_{v \in D_k} p_{\blam}(A_k{=}v \mid \bx_{-k})
  = 1$ for every $\bx_{-k}$, this shift leaves all
  conditional distributions --- and hence $p_{\blam}$
  itself --- unchanged.
  The L-BFGS solver returns the unique $\blam^*$ with
  minimum $\ell_2$ norm; GibbsPCDSolver converges to
  a different element of the same equivalence class.
  The distance $\|\hat{\blam} - \blam^*\|/\|\blam^*\|$
  measures the gap between two representatives, not
  solver error.}
shifting all $\lambda_j$ involving attribute $k$ by a
constant does not change $p_{\blam}$, so the distance
$\|\hat{\blam} - \blam^*\|/\|\blam^*\|$ is not a
meaningful convergence metric; MRE and KL are the
correct diagnostics.
(shifting all $\lambda_j$ involving attribute $k$ by a
constant does not change $p_{\blam}$), so the distance to the
specific $\blam^*$ returned by L-BFGS is not a meaningful
convergence metric; MRE and KL are the correct diagnostics.
\Cref{fig:a0} shows convergence curves, a $\lambda$ scatter
plot, and the constraint frequency scatter $\hat\alpha_j$
vs.\ $\alpha_j$, all confirming correct behaviour.

\begin{figure}[t]
  \centering
  \includegraphics[width=\textwidth]{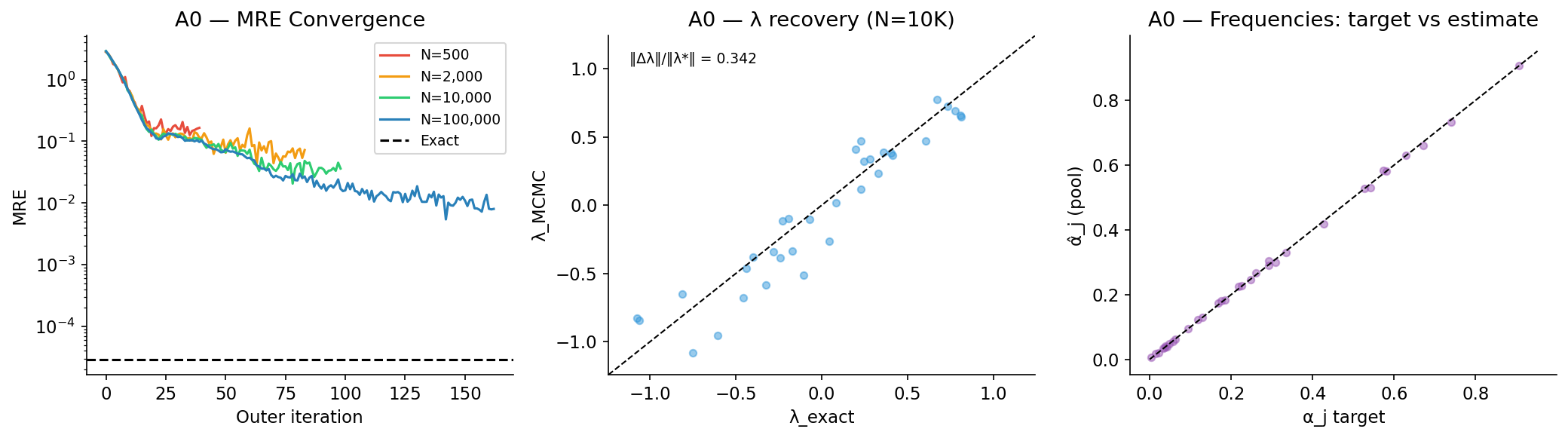}
  \caption{%
    Experiment A0 ($K{=}6$, $|\cX|{=}216$).
    \emph{Left}: MRE convergence curves for four pool sizes
    (log scale); dashed line is exact L-BFGS reference.
   \emph{Centre}: scatter of $\hat{\blam}$ vs.\ $\blam^*$
at $N{=}10{,}000$; the systematic offset from the diagonal
reflects a gauge degree of freedom in $\blam$ (see footnote),
not solver error.
    \emph{Right}: estimated frequencies $\hat\alpha_j$
    vs.\ targets $\alpha_j$ at $N{=}10{,}000$;
    points lie close to the diagonal across the full
    range $[0.004,\, 0.906]$.}
  \label{fig:a0}
\end{figure}

\paragraph{Experiment A1a --- Wu benchmark ($K{=}8$).}

\textit{Setup.}
A WuGenerator instance with $K{=}8$,
domain sizes $[4,4,2,2,4,4,3,2]$, $|\cX|{=}6{,}144$,
$P{=}4$ planted binary patterns (frequencies
$0.095$--$0.283$), and $N_{\text{data}}{=}200{,}000$
samples produces a constraint set with $m{=}61$
constraints (25 unary, 36 binary).
The exact L-BFGS solver reaches $\MRE{=}4{\times}10^{-5}$
in $0.03$~s.

\textit{Results.}
\Cref{tab:validation} (A1a rows) and \cref{fig:a1a}
report GibbsPCDSolver across five configurations.
MRE decreases from $0.099$ at $N{=}1{,}000$ to
$0.011$ at $N{=}50{,}000$, tracking the $O(1/\sqrt{N})$
floor closely (panel~c).
KL divergence scales as $O(1/N)$, from
$9.0{\times}10^{-3}$ to $1.8{\times}10^{-4}$.
As in A0, the relative parameter distance
$\|\Delta\blam\|/\|\blam^*\| \approx 0.50$ is
insensitive to $N$, confirming that it reflects
gauge non-identifiability rather than solver error.
Doubling sweeps from $s{=}5$ to $s{=}10$ at
$N{=}20{,}000$ yields no MRE improvement
($0.022$ vs $0.023$) at $2.3{\times}$ the runtime,
confirming that variance dominates over mixing bias
at this pool size.

\begin{figure}[t]
  \centering
  \includegraphics[width=\textwidth]{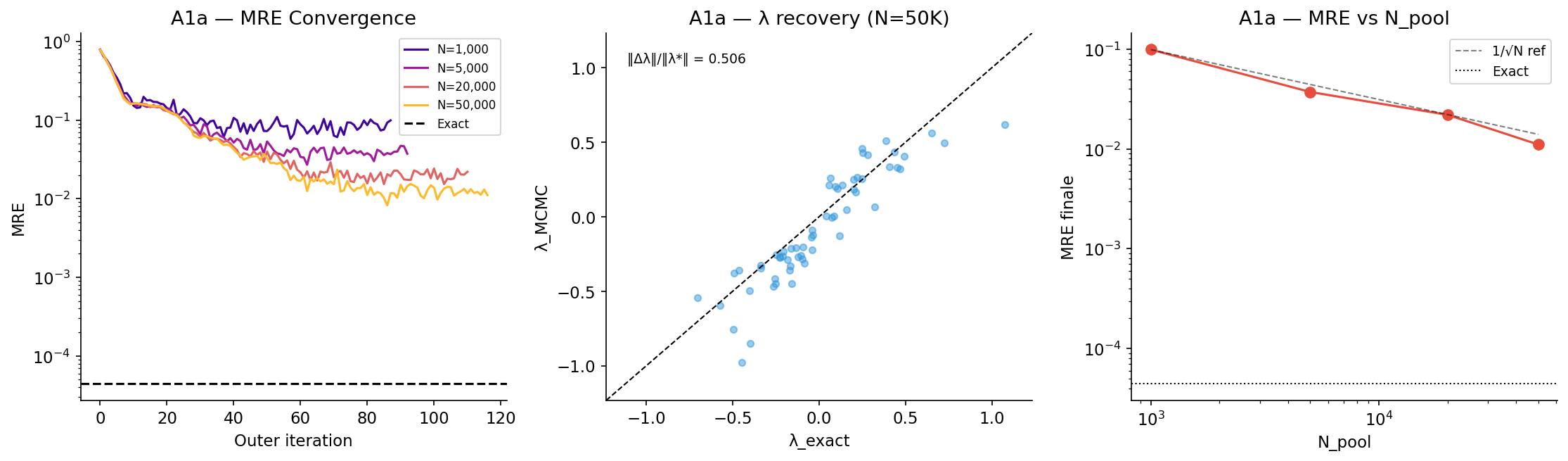}
  \caption{%
    Experiment A1a ($K{=}8$, $|\cX|{=}6{,}144$,
    $m{=}61$, $s{=}5$).
    \emph{Left}: MRE convergence curves for four pool
    sizes (log scale); dashed line is the exact
    L-BFGS reference ($\MRE{=}4{\times}10^{-5}$).
    \emph{Centre}: scatter of $\blam_{\mathrm{MCMC}}$
    vs.\ $\blam^*$ at $N{=}50{,}000$; the constant
    offset from the diagonal is the gauge degree of
    freedom induced by unary constraints and does not
    affect the distribution $p_{\blam}$.
    \emph{Right}: final MRE vs.\ $N$ (log--log);
    the dashed reference line follows $1/\sqrt{N}$,
    confirming the theoretical variance floor.}
  \label{fig:a1a}
\end{figure}

\paragraph{Experiment A1b --- Planted exponential family ($K{=}10$).}

\textit{Setup.}
The PlantedExpFamilyGenerator draws $\blam^*$ explicitly
from a uniform distribution over $[-1.07, 1.01]$ and
constructs $p_{\blam^*}$ by exact enumeration over
$|\cX|{=}59{,}049$ tuples ($K{=}10$, all domains of
size~3).
The constraint set consists of $m{=}30$ binary constraints
(no unary), with targets $\alpha^*_j = \mathbb{E}_{p_{\blam^*}}[f_j]$
computed exactly.
This design eliminates gauge non-identifiability
(no unary constraints) and allows exact computation of
$\KL(p_{\blam^*} \| p_{\blam_{\mathrm{MCMC}}})$ and
$\|\blam_{\mathrm{MCMC}} - \blam^*\| / \|\blam^*\|$.
The model entropy is $H(p_{\blam^*}){=}10.57$~nats
(vs.\ $H_{\max}{=}\log 59{,}049 \approx 11.0$~nats),
indicating a moderately constrained distribution.
The exact L-BFGS solver recovers $\blam^*$ to
$\|\Delta\blam\|/\|\blam^*\|{=}8{\times}10^{-5}$
with $\KL{<}10^{-6}$.

\textit{Results.}
\Cref{tab:validation} (A1b rows) and \cref{fig:a1b}
report GibbsPCDSolver across four pool sizes.
Because unary constraints are absent, the parameter
distance is now a meaningful convergence metric:
$\|\Delta\blam\|/\|\blam^*\|$ decreases from $0.072$
at $N{=}1{,}000$ to $0.016$ at $N{=}50{,}000$,
tracking the same $O(1/\sqrt{N})$ rate as MRE.
The KL divergence scales as $O(1/N)$
(from $2.5{\times}10^{-3}$ to $1.2{\times}10^{-4}$),
and the $\lambda$ scatter at $N{=}50{,}000$
(panel~b of \cref{fig:a1b}) lies tightly around
the diagonal with no systematic offset,
confirming that GibbsPCDSolver recovers the true
parameters when the problem is identifiable.

\begin{figure}[t]
  \centering
  \includegraphics[width=\textwidth]{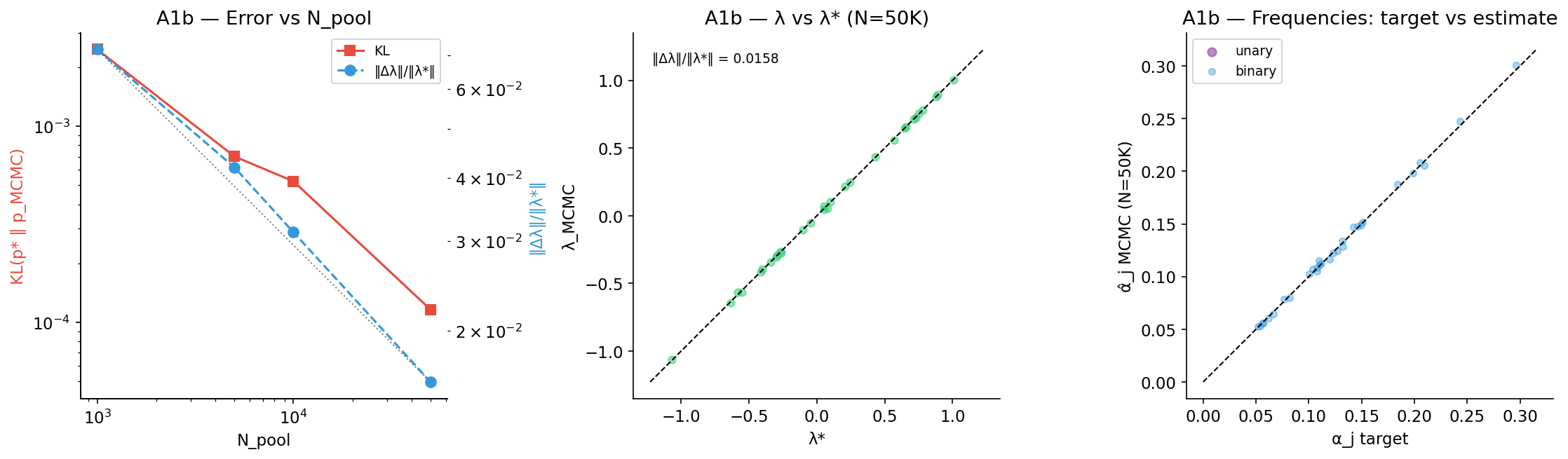}
  \caption{%
    Experiment A1b ($K{=}10$, $|\cX|{=}59{,}049$,
    $m{=}30$ binary constraints, $s{=}5$).
    \emph{Left}: KL divergence $\KL(p_{\blam^*}\|p_{\blam_{\mathrm{MCMC}}})$
    (red, left axis) and relative parameter distance
    $\|\Delta\blam\|/\|\blam^*\|$ (blue, right axis)
    vs.\ pool size $N$ (log--log); the dotted line
    shows the $O(1/N)$ reference.
    \emph{Centre}: scatter of $\blam_{\mathrm{MCMC}}$
    vs.\ $\blam^*$ at $N{=}50{,}000$; unlike A0--A1a,
    no gauge offset is present and points cluster
    tightly around the diagonal
    ($\|\Delta\blam\|/\|\blam^*\|{=}0.016$).
    \emph{Right}: estimated constraint frequencies
    $\hat\alpha_j$ vs.\ exact targets $\alpha^*_j$
    at $N{=}50{,}000$, for binary constraints (blue).}
  \label{fig:a1b}
\end{figure}
\begin{table}[t]
\centering
\caption{%
  Algorithmic validation: GibbsPCDSolver vs.\ exact L-BFGS.
  MRE: mean relative error on training constraints.
  $\|\Delta\blam\|$: relative parameter distance
  (computed only for A1b where $\blam^*$ is known exactly).
  KL: $\KL(p_{\blam^*} \| p_{\blam_{\mathrm{MCMC}}})$
  (computed only when $|\cX|$ is enumerable).}
\label{tab:validation}
\begin{tabular}{llrrrrr}
\toprule
Exp. & Method & $N$ & $s$ & MRE & $\|\Delta\blam\|/\|\blam^*\|$ & KL \\
\midrule
A1a   & Exact L-BFGS      &  ---    & --- & $4\times10^{-5}$ & ---    & ---      \\
(K=8, & Gibbs             &  1,000  &   5 & 0.0991           & 0.498  & 0.00898  \\
$|\cX|$& Gibbs            &  5,000  &   5 & 0.0373           & 0.505  & 0.00098  \\
$=$6144)& Gibbs           & 20,000  &   5 & 0.0220           & 0.503  & 0.00029  \\
       & Gibbs            & 50,000  &   5 & 0.0111           & 0.506  & 0.00018  \\
       & Gibbs            & 20,000  &  10 & 0.0231           & 0.505  & 0.00049  \\
\midrule
A1b    & Exact L-BFGS      &  ---    & --- & $2\times10^{-5}$ & $8\times10^{-5}$ & $<10^{-6}$ \\
(K=10, & Gibbs             &  1,000  &   5 & 0.0906           & 0.0719  & $2.5\times10^{-3}$ \\
$|\cX|$& Gibbs             &  5,000  &   5 & 0.0401           & 0.0420  & $7.0\times10^{-4}$ \\
$=$59049)& Gibbs           & 10,000  &   5 & 0.0298           & 0.0313  & $5.2\times10^{-4}$ \\
       & Gibbs             & 50,000  &   5 & 0.0180           & 0.0158  & $1.2\times10^{-4}$ \\
\bottomrule
\end{tabular}
\end{table}

\section{The Syn-ISTAT Benchmark}
\label{sec:synistat}

We introduce \emph{Syn-ISTAT}, a synthetic demographic benchmark
designed to evaluate MaxEnt solvers in the regime where exact
enumeration is infeasible ($K{=}15$, $|\cX| \approx 1.7 \times 10^8$)
and individual microdata are unavailable.

Syn-ISTAT is an \emph{ISTAT-inspired} benchmark, {\it not} a dataset
derived from official Italian microdata.
Its conditional probability tables (CPTs) reflect known
structural patterns of the Italian demographic system---the
gender gap in labour market participation, age-dependent
income profiles, the urban/rural transport split---but are
constructed synthetically to ensure full ground-truth control.
This design choice is deliberate: it allows exact analytical
computation of all marginal targets and eliminates any
confound between solver error and data inconsistency.
The application of GibbsPCDSolver to real ISTAT municipal
data and the associated
challenges of heterogeneous constraint universes and
inconsistent reference populations are left to a companion
paper currently in preparation.

Syn-ISTAT is a Bayesian network over $K{=}15$ categorical
attributes representing the demographic and socioeconomic
profile of a synthetic Italian individual:
sex, age, marital status, education, employment, income,
household size, presence of children, residential area,
car access, main transport mode, commute time, diet type,
alcohol use, and physical activity
(full attribute table in \cref{app:synistat}).
Six \emph{anchor} variables (sex, age, education,
household size, residential area, car access) are drawn
from fixed marginals; the remaining nine are sampled from
CPTs conditioned on one or two anchor variables, forming a
shallow Bayesian network with limited dependency depth.
Three ternary CPTs encode higher-order interactions that
are invisible to any pair of binary tables:
$P(\text{employment} \mid \text{sex}, \text{age})$,
$P(\text{income} \mid \text{education}, \text{employment})$, and
$P(\text{transport} \mid \text{area}, \text{car access})$.

All marginal targets $\alpha^*_j$ are computed
\emph{analytically} by factor marginalisation over the CPTs,
without Monte Carlo sampling.
This ensures that any solver error measured against $\alpha^*$
reflects the algorithm's approximation, not sampling noise.
For a binary constraint $(k_1, k_2)$:
\begin{equation}
  \alpha^*_{k_1 k_2}(v_1, v_2)
  = \sum_{\bx_{-\{k_1,k_2\}}}
    p_{\mathrm{BN}}(\bx),
\end{equation}
computed via \texttt{np.einsum} contractions following the
topological order of the network.
All 31 marginals normalise to $1$ within machine precision
($< 10^{-14}$).

The full constraint set consists of $m_1{=}15$ unary,
$m_2{=}13$ binary, and $m_3{=}3$ ternary constraints,
expanded to $m \approx 280$ atomic indicator features.
For the held-out experiment (\cref{sec:exp_heldout}),
we use the 28 unary and binary constraints as training
and the 3 ternary constraints as held-out evaluation:
the ternary targets encode interactions strictly beyond the
solver's training signal, providing a direct test of
generalisation to unseen higher-order interactions.


Full CPT specifications are reproduced in \cref{app:synistat}.

\section{Experiments}
\label{sec:experiments}

All experiments use the GibbsPCDSolver implementation
described in \cref{sec:method}, with Adam optimiser
($\beta_1{=}0.9$, $\beta_2{=}0.999$, $\varepsilon{=}10^{-8}$)
and adaptive stopping according to \cref{eq:stopping}
with $\tau{=}0.02$.
Unless otherwise noted, the persistent pool is initialised
uniformly and $\blam_0 = \mathbf{0}$.

As baseline, we use generalised raking (IPF) on an initial
synthetic sample of size $N$ drawn uniformly over the
attribute domains.
Its error is evaluated from the weighted frequency vector,
ensuring direct comparability with GibbsPCDSolver on the same
set of aggregate constraints.

\paragraph{Organisation of this section.}
The experiments are structured to address four complementary
questions in order of increasing specificity.
\Cref{sec:exp_scaling} tests the central scalability claim:
does GibbsPCDSolver maintain competitive accuracy as $|\cX|$
grows beyond the reach of exact enumeration?
\Cref{sec:exp_heldout} asks whether the learned model
generalises to higher-order interactions withheld from
training, probing the structural advantage of the exponential
family parametrisation over sequential IPF rescaling.
\Cref{sec:exp_diversity} addresses the question that is
invisible from MRE alone: does the synthesised population
provide the demographic variety required for agent-based
simulation?
\Cref{sec:exp_wu,sec:exp_sensitivity} close with
hyperparameter characterisation, moving from a controlled
setting with exact reference ($K{=}8$) to the realistic
Syn-ISTAT regime ($K{=}15$) where exact computation is
unavailable.

\paragraph{Note on training configurations.}
Two distinct training configurations appear in this section.
\Cref{sec:exp_scaling,sec:exp_diversity} train on all 31
Syn-ISTAT constraints (15 unary + 13 binary + 3 ternary),
reaching $\MRE{=}0.030$ at convergence (535 iterations,
$N{=}100{,}000$).
\Cref{sec:exp_heldout} withholds the 3 ternary constraints
and trains on the 28 unary and binary constraints only,
reaching $\MRE_{\text{train}}{=}0.018$ (600 iterations);
the held-out ternary constraints are evaluated exclusively
at test time.
Unless otherwise stated, MRE values refer to the
respective training set of each experiment.

\subsection{Scaling Beyond Exact Enumeration
            ($K \in \{12, 20, 30, 40, 50\}$)}
\label{sec:exp_scaling}

We begin by evaluating GibbsPCDSolver in the regime where
exact enumeration becomes infeasible.
This experiment directly tests the central claim of the paper:
that MaxEnt inference via PCD scales with $N$ and $K$,
rather than with $|\cX|$.

More precisely, we evaluate GibbsPCDSolver across five WuGenerator instances
with $K \in \{12, 20, 30, 40, 50\}$, exclusively ternary
planted patterns, and $|\cX|$ ranging from $83{,}000$
to $5.8{\times}10^{18}$.
For $K{=}12$ the exact L-BFGS solver is still available
and serves as reference; for $K \ge 20$, $|\cX|$ is not
enumerable and GibbsPCDSolver is the only MaxEnt-based
method that can operate.
All runs use $N{=}100{,}000$, $s{=}5$, $\eta{=}0.01$,
with Numba acceleration for $K \ge 20$.
We compare against generalised raking at the same $N$,
using MRE, $\Neff$, and Shannon entropy $H$ as metrics.
\Cref{tab:a2} reports the full results;
\cref{fig:a2_scaling} summarises the key trends.

\begin{table}[t]
\centering
\caption{%
  A2 scaling experiments: GibbsPCDSolver vs.\ raking
  across $K \in \{12, 20, 30, 40, 50\}$ with ternary
  constraints only, $N{=}100{,}000$, $s{=}5$.
  For $K{=}12$ the exact L-BFGS reference is available
  ($\MRE{=}1.3{\times}10^{-4}$, $t{=}2.4$~s);
  for $K \ge 20$, $|\cX|$ is not enumerable.
  Raking $\MRE{=}0$ for $K \le 30$ is exact on training
  constraints by algebraic construction and is not a
  meaningful accuracy metric; for $K \ge 40$ raking
  fails to satisfy the constraints ($\MRE > 0$).}
\label{tab:a2}
\begin{tabular}{llrrrrrr}
\toprule
$K$ & Method & $|\cX|$ & MRE & $t$ (s)
    & $\Neff$ & $\Neff/N$ & $H$ (nats) \\
\midrule
\multirow{2}{*}{12}
  & Gibbs PCD & $8.3{\times}10^4$  & 0.012 &  607 & 100,000 & 100\%  & 9.40 \\
  & Raking    & ---                & $0^\dagger$ &    7 &   7,138 &   7.1\% & 9.19 \\
\midrule
\multirow{2}{*}{20}
  & Gibbs PCD & $2.7{\times}10^7$  & 0.010 &  440 & 100,000 & 100\%  & 11.43 \\
  & Raking    & ---                & $0^\dagger$ &   16 &   3,102 &   3.1\% & 9.37 \\
\midrule
\multirow{2}{*}{30}
  & Gibbs PCD & $4.7{\times}10^{11}$ & 0.010 &  965 & 100,000 & 100\%  & 11.51 \\
  & Raking    & ---                  & $0^\dagger$ &  16 &     790 &   0.8\% & 8.36 \\
\midrule
\multirow{2}{*}{40}
  & Gibbs PCD & $2.8{\times}10^{16}$ & 0.018 & 1550 & 100,000 & 100\%  & 11.51 \\
  & Raking    & ---                  & 0.129$^\ddagger$ & 22 & 122 & 0.12\% & 5.09 \\
\midrule
\multirow{2}{*}{50}
  & Gibbs PCD & $5.8{\times}10^{18}$ & 0.013 & 2061 & 100,000 & 100\%  & 11.51 \\
  & Raking    & ---                  & 0.065$^\ddagger$ & 27 & 125 & 0.12\% & 5.14 \\
\bottomrule
\multicolumn{8}{l}{%
  $^\dagger$ Exact on training by construction (see text).\quad
  $^\ddagger$ Raking fails: training MRE $> 0$.}
\end{tabular}
\end{table}

\paragraph{MRE stability across five decades of $|\cX|$.}
The most striking result is that GibbsPCDSolver maintains
MRE in the range $[0.010, 0.018]$ across all five values
of $K$, while $|\cX|$ grows from $83{,}000$ to
$5.8{\times}10^{18}$---eighteen orders of magnitude.
This confirms that the computational cost of GibbsPCDSolver
scales with $N$ and $K$, not with $|\cX|$, and that
accuracy does not degrade as the problem grows
into the genuinely non-enumerable regime.
The runtime grows approximately as $O(K)$
(from $607$~s at $K{=}12$ to $2{,}061$~s at $K{=}50$),
consistent with the theoretical cost
$O(N \cdot K \cdot \bar{J} \cdot d_{\max})$ per sweep.

\paragraph{Raking breaks at $K{=}40$.}
For $K \le 30$, raking achieves $\MRE{=}0$ on training
constraints by algebraic construction.
At $K{=}40$, raking fails: $\MRE{=}0.129$, confirming
the competence-map threshold identified by
\citet{pachet2026} at $K \approx 28$--$40$.
At $K{=}50$, raking partially recovers to $\MRE{=}0.065$,
likely because the WuGenerator problem at $K{=}50$ has
a different constraint graph topology than the
NPORS-derived instances of \citet{pachet2026}.

\paragraph{Comparison with \citet{pachet2026}.}
\Cref{fig:a2_scaling} (left) overlays our GibbsPCDSolver
results with the exact MaxEnt results of \citet{pachet2026}
(their Table~2, arity$=3$, $N{\approx}100{,}000$).
Our MRE values at $K \in \{12, 20\}$ are numerically
close to theirs ($0.012$ vs.\ $0.017$ at $K{=}12$;
$0.010$ vs.\ $0.013$ at $K{=}20$).
\emph{This does not imply that GibbsPCDSolver outperforms
exact MaxEnt}: the two sets of experiments use different
problem instances (WuGenerator ternary patterns
vs.\ NPORS-derived constraints) with different
constraint densities and planted structure.
The comparison is qualitative and its purpose is to
show that GibbsPCDSolver achieves competitive accuracy
in the regime where exact enumeration applies, while
extending to $K{=}50$ where $|\cX| \approx 6{\times}10^{18}$
makes exact computation impossible by any measure.

\paragraph{Raking degeneracy grows monotonically with $K$.}
Raking's effective sample size collapses from
$\Neff/N{=}7.1\%$ at $K{=}12$ to $0.12\%$ at $K{=}40$,
where it stabilises ($0.12\%$ also at $K{=}50$).
At the same time, the Shannon entropy of the raking
distribution drops from $9.19$ at $K{=}12$ to
$5.09$ at $K{=}40$---a collapse of $4.1$~nats
corresponding to a factor of $e^{4.1} \approx 60{\times}$
fewer effective profile types.
GibbsPCDSolver maintains $\Neff{=}N$ and entropy
stabilised at $11.51$~nats for $K \ge 30$, yielding
an entropy advantage of $6.4$~nats (${\approx}600{\times}$
in diversity) at $K \ge 40$.
At $K{=}40$, raking simultaneously fails to satisfy
training constraints \emph{and} produces a population
concentrated on ${\approx}120$ archetypes---the worst
of both failure modes.

\begin{figure}[t]
  \centering
  \includegraphics[width=\textwidth]{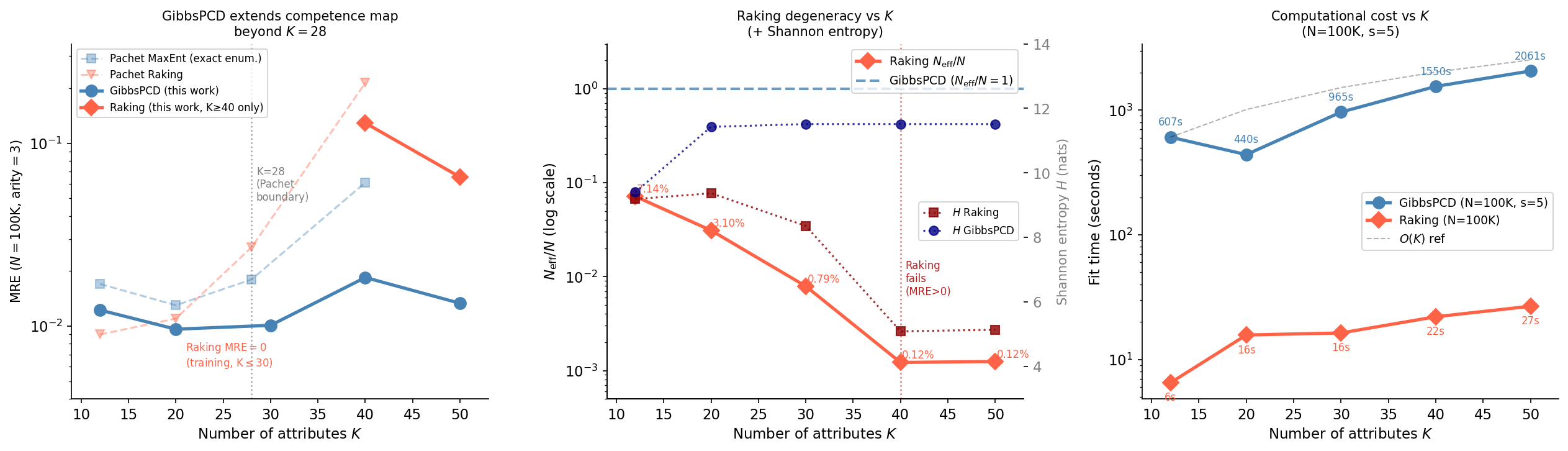}
  \caption{%
    A2 scaling experiments ($N{=}100{,}000$, arity=3).
    \emph{Left}: MRE of GibbsPCDSolver (solid blue) vs.\
    exact MaxEnt of \citet{pachet2026} (dashed blue,
    their Table~2).
    The two curves use different problem instances
    (WuGenerator vs.\ NPORS); the comparison is
    qualitative---both achieve MRE in the $1$--$2\%$
    range---and is not a claim that GibbsPCDSolver
    outperforms exact enumeration.
    At $K{=}30$ only GibbsPCDSolver is available
    ($|\cX| \approx 5{\times}10^{11}$).
    \emph{Centre}: raking $\Neff/N$ (red diamonds,
    log scale, left axis) and Shannon entropy $H$
    (dotted lines, right axis) for both methods.
    \emph{Right}: fit time vs.\ $K$ (log scale).}
  \label{fig:a2_scaling}
\end{figure}

\subsection{Held-Out Generalisation}
\label{sec:exp_heldout}

We next evaluate whether the learned MaxEnt model captures
higher-order structure beyond the training constraints.
Both solvers are trained on unary and binary marginals,
and evaluated on held-out ternary interactions.

Both solvers are trained on the 28 unary and binary
constraints of Syn-ISTAT ($N{=}100{,}000$);
the 3 ternary constraints are withheld and evaluated
after training.

\paragraph{Training MRE.}
GibbsPCDSolver reaches $\MRE_{\text{train}}{=}0.018$
after 600 iterations ($20$~min).
Raking achieves $\MRE_{\text{train}}{=}0$ exactly,
as guaranteed by IPF convergence.
This asymmetry is structural and not indicative of
better generalisation.

\paragraph{Held-out results.}
\Cref{tab:heldout} reports MRE on the three withheld
ternary constraints.

\begin{table}[t]
\centering
\caption{%
  A-ISTAT-2: held-out MRE on ternary constraints.
  Both solvers trained on 28 unary+binary constraints
  ($N{=}100{,}000$). \textbf{Bold}: better result.}
\label{tab:heldout}
\begin{tabular}{lrrrr}
\toprule
Method
  & $\MRE_{\text{train}}$
  & $\MRE_{\text{T1}}$
  & $\MRE_{\text{T2}}$
  & $\MRE_{\text{T3}}$ \\
\midrule
Gibbs PCD & 0.018 & 0.349 & 0.169 & \textbf{0.197} \\
Raking    & 0.000 & \textbf{0.347} & \textbf{0.139} & 0.214 \\
\bottomrule
\multicolumn{5}{l}{%
  T1: edu$\times$emp$\to$income;\quad
  T2: area$\times$car$\to$transport;\quad
  T3: sex$\times$age$\to$employment.}
\end{tabular}
\end{table}

The results reveal a structured pattern that reflects
the topology of the Syn-ISTAT constraint graph.
On T2, raking is substantially better ($0.139$ vs $0.169$):
both conditioning variables of T2 (residential area and
car access) are anchor variables with stable marginals,
and IPF rescaling efficiently propagates their joint
signal.
On T3, GibbsPCDSolver outperforms raking
($\mathbf{0.197}$ vs $0.214$): employment has the highest
degree in the constraint graph (degree~5, connected
to age, sex, education, income, and commute time),
and the MaxEnt exponential family combines these binary
signals coherently through the shared parameter vector
$\blam$.
Raking processes constraints sequentially and
independently, losing the joint information.
On T1 both methods perform similarly.

\subsection{Population Diversity}
\label{sec:exp_diversity}

We now assess the diversity of the generated populations,
which is critical for downstream agent-based simulation.
Even when training accuracy is comparable, methods may differ
substantially in the variety of individuals they produce: a population
with correct marginals but low demographic diversity
produces unrealistic dynamics through excessive
cloning of a few archetypal profiles.
We evaluate diversity with three metrics:
effective sample size $\Neff$ (\cref{eq:neff}),
empirical Shannon entropy
$H = -\sum_{\bx} \hat p(\bx) \log \hat p(\bx)$,
and number of distinct demographic profiles.

\Cref{tab:diversity} reports results for both solvers
on the full Syn-ISTAT problem ($N{=}100{,}000$, 31 constraints).

\begin{table}[t]
\centering
\caption{%
  Population diversity: GibbsPCDSolver vs.\ raking
  ($N{=}100{,}000$, Syn-ISTAT, $K{=}15$).
  $H_{\max} = \log|\cX| = 18.93$ nats.}
\label{tab:diversity}
\begin{tabular}{lrrrr}
\toprule
Method & $\Neff$ & $\Neff/N$ & $H$ (nats) & Unique profiles \\
\midrule
Gibbs PCD & $100{,}000$ & $100\%$  & $11.47$ & $97{,}025$ \\
Raking    & $1{,}152$   & $1.2\%$  & $8.32$  & $25{,}786$ \\
\midrule
Ratio (G/R) & $86.8{\times}$ & --- & $+3.15$ & $3.8{\times}$ \\
\bottomrule
\end{tabular}
\end{table}

The two ratios in the bottom row of \cref{tab:diversity} measure
different things and should not be confused.
Unique profiles counts the support of the pool before weighting:
raking retains $25{,}786$ distinct tuples, but the vast majority
carry negligible weight and will almost never be drawn when
sampling a synthetic population.
$\Neff$, by contrast, captures the effective concentration of
the weight distribution: the $22{\times}$ gap between the two
ratios ($86.8{\times}$ vs.\ $3.8{\times}$) reflects the fact
that raking's $25{,}786$ unique profiles are ``ghost'' individuals
whose combined probability mass is negligible.
The relevant metric for agent-based simulation is $\Neff$:
it bounds the expected number of distinct archetypes in any
downstream sample, regardless of pool support.

The contrast is stark.
Raking converges to an effective population of $1{,}152$
individuals: its weight distribution has Gini
coefficient $0.951$---exceeding the most unequal
empirical income distributions---and the Lorenz curve
(see \cref{fig:diversity}) departs dramatically from
equality.
GibbsPCDSolver maintains uniform pool weights by
construction, achieving $\Neff{=}N$ with $97{,}025$
distinct profiles ($97\%$ of pool size) and an entropy
$3.15$ nats higher than raking.

The practical implication is direct.
For a synthetic municipality of $n$ inhabitants drawn
from the raking distribution, the expected number of
distinct profiles is $\approx \Neff \approx 1{,}200$
regardless of $n$: a city of $400{,}000$ synthetic
Bolognesi would consist almost entirely of repetitions
of $\sim 1{,}200$ archetypes.
GibbsPCDSolver provides the demographic variety that
agent-based urban models require for realistic
emergent behaviour.

This result provides the definitive answer to the
question of why MaxEnt inference via PCD is preferable
to raking even in settings where both achieve comparable
held-out accuracy: raking is algebraically exact on
training constraints but produces \emph{degenerate}
synthetic populations.

\begin{figure}[t]
  \centering
  \includegraphics[width=\textwidth]{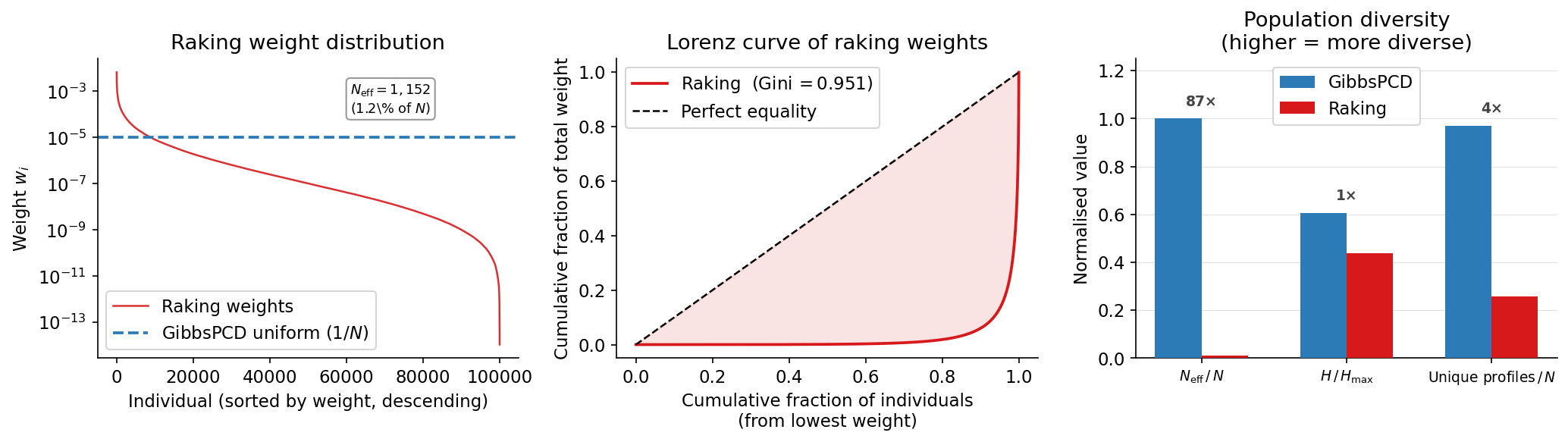}
 \caption{%
  Population diversity: GibbsPCDSolver vs.\ generalised
  raking ($N{=}100{,}000$, Syn-ISTAT, $K{=}15$,
  full training on 31 constraints).
  \emph{Left}: raking weight distribution (log scale);
  the dashed line marks the uniform weight $1/N$ of
  GibbsPCDSolver.
  After convergence, raking concentrates almost all mass
  on $N_{\rm eff}{=}1{,}152$ effective individuals
  ($1.2\%$ of $N$).
  \emph{Centre}: Lorenz curve of raking weights
  (Gini${=}0.951$); the diagonal is perfect equality.
  The shaded area between the two curves quantifies
  the weight concentration.
  \emph{Right}: normalised diversity metrics for both
  solvers; ratios above each pair indicate the
  GibbsPCD advantage.
  GibbsPCDSolver maintains uniform pool weights by
  construction ($N_{\rm eff}{=}N$), achieving an
  $86.8{\times}$ diversity advantage in effective
  sample size and $3.15$~nats higher Shannon entropy
  than raking---essential for realistic agent-based
  dynamics.}

  \label{fig:diversity}
\end{figure}

\subsection{Hyperparameter Sensitivity: Wu Benchmark ($K{=}8$)}
\label{sec:exp_wu}

We analyse the effect of key hyperparameters on controlled
problems where the exact MaxEnt solution is available.

More precisely, we characterise the effect of pool size $N$ and sweep count $s$
on two controlled problems where the exact solution is available:
one with binary constraints only and one with ternary constraints,
both on the same WuGenerator instance ($K{=}8$, $|\cX|{=}6{,}144$).

\paragraph{Binary constraints ($m{=}61$, 25 unary + 36 binary).}
The exact L-BFGS solver reaches $\MRE{=}4{\times}10^{-5}$
in $0.03$~s.
\Cref{tab:a1c} (top block) shows MRE across the full $N \times s$
grid.
The Pareto-optimal configuration is $N{=}50{,}000$, $s{=}1$
($\MRE{=}0.012$, $78$~s): extra sweeps do not help because
gradient variance dominates over mixing bias at large $N$.

\paragraph{Ternary constraints ($m{=}73$, 25 unary + 48 ternary).}
The exact solver reaches $\MRE{=}10^{-4}$ in $0.15$~s.
\Cref{tab:a1c} (bottom block) reveals a different pattern:
at $N{=}50{,}000$ the minimum is at $s{=}10$ ($\MRE{=}0.022$),
and $s{=}1$ gives $\MRE{=}0.025$---only marginally worse at
$11{\times}$ lower cost.
The key finding is that \emph{ternary constraints require larger
$N$ and benefit from more sweeps}: at $N{=}50{,}000$, the ternary
MRE floor ($0.022$) is $1.8{\times}$ higher than the binary
($0.012$), confirming that higher-order interactions impose a
stricter demand on pool diversity.
This directly motivates the choice $s{=}5$ in the Syn-ISTAT
experiments.
\begin{table}[t]
\centering
\caption{%
  A1c: GibbsPCDSolver sensitivity to pool size $N$ and
  sweep count $s$ on the Wu benchmark ($K{=}8$,
  $|\cX|{=}6{,}144$).
  \emph{Top}: binary constraints ($m{=}61$); exact reference
  $\MRE{=}4{\times}10^{-5}$.
  \emph{Bottom}: ternary constraints ($m{=}73$); exact reference
  $\MRE{=}10^{-4}$.
  Time in seconds. \textbf{Bold}: Pareto-optimal entry
  (lowest MRE per unit time) in each block.}
\label{tab:a1c}
\begin{tabular}{r rrr rrr rrr}
\toprule
& \multicolumn{3}{c}{$s=1$}
& \multicolumn{3}{c}{$s=5$}
& \multicolumn{3}{c}{$s=10$} \\
\cmidrule(lr){2-4}\cmidrule(lr){5-7}\cmidrule(lr){8-10}
$N$ & MRE & $t$ && MRE & $t$ && MRE & $t$ & \\
\midrule
\multicolumn{10}{l}{\textit{Binary constraints}} \\
$1{,}000$  & 0.097 &   3 && 0.076 &  11 && 0.062 &  22 & \\
$5{,}000$  & 0.040 &   8 && 0.052 &  38 && 0.039 &  75 & \\
$20{,}000$ & 0.030 &  30 && 0.016 & 141 && 0.023 & 279 & \\
$50{,}000$ & \textbf{0.012} & 78 && 0.015 & 364 && 0.014 & 728 & \\
\midrule
\multicolumn{10}{l}{\textit{Ternary constraints}} \\
$1{,}000$  & 0.137 &   3 && 0.104 &  15 && 0.114 &  29 & \\
$5{,}000$  & 0.065 &  11 && 0.065 &  46 && 0.060 &  91 & \\
$20{,}000$ & 0.035 &  34 && 0.031 & 162 && 0.042 & 323 & \\
$50{,}000$ & 0.025 &  94 && 0.026 & 435 && \textbf{0.022} & 860 & \\
\bottomrule
\end{tabular}
\end{table}
\subsection{Hyperparameter Sensitivity: Syn-ISTAT ($K{=}15$)}
\label{sec:exp_sensitivity}

We finally examine hyperparameter sensitivity on the
Syn-ISTAT benchmark, where exact computation is not
available.
This experiment complements the Wu analysis by showing how
optimal settings depend on constraint structure in a more
realistic scenario: we repeat the sensitivity analysis on Syn-ISTAT,
where $|\cX| \approx 1.7{\times}10^8$ and
the exact solver is unavailable.

\paragraph{Calibration at K=15.}
With $\eta{=}0.05$ and $s{=}1$ (the K=8 defaults),
the solver oscillates between $\MRE \approx 0.08$
and $0.35$ for 300 iterations without converging.
Reducing $\eta$ to $0.01$ and increasing $s$ to $5$
yields monotone descent from $\MRE{=}0.59$ to
$\MRE{=}0.03$ at convergence (535 iterations).
This calibration difference demonstrates that the
optimal sweep count scales with constraint graph
complexity: $s{=}1$ suffices for sparse pairwise
constraints at $K{=}8$, but $s{=}5$ is necessary
for mixed-arity (1+2+3-way) constraints at $K{=}15$.

\paragraph{Pool size results.}
\Cref{tab:a_istat3} summarises MRE over
$N \in \{10{,}000,\, 25{,}000,\, 50{,}000,\,
100{,}000,\, 200{,}000\}$.
The $N{=}50{,}000$ and $N{=}100{,}000$ entries yield
nearly identical MRE ($0.032$), both stopped by the
adaptive rule before reaching the variance floor;
this is an artefact of the stopping window and not a
structural non-monotonicity.
The practical sweet spot is $N{=}25{,}000$:
$\MRE{=}0.050$ in $7$~min, versus
$\MRE{=}0.021$ in $80$~min for $N{=}200{,}000$.

\begin{table}[t]
\centering
\caption{%
  A-ISTAT-3: GibbsPCDSolver pool-size sensitivity on
  Syn-ISTAT ($K{=}15$, 31 constraints, $s{=}5$,
  $\eta{=}0.01$).
  Raking MRE$_{\text{train}}{=}0$ is exact by construction
  and not directly comparable; see \cref{tab:heldout}.}
\label{tab:a_istat3}
\begin{tabular}{lrrrr}
\toprule
Method & $N$ & MRE & Iters & Time (s) \\
\midrule
Gibbs & $10{,}000$  & 0.092 & 323 & 119    \\
Gibbs & $25{,}000$  & 0.050 & 435 & 418    \\
Gibbs & $50{,}000$  & 0.032 & 566 & 1{,}057 \\
Gibbs & $100{,}000$ & 0.032 & 541 & 2{,}074 \\
Gibbs & $200{,}000$ & 0.021 & 590 & 4{,}818 \\
\midrule
Raking & $100{,}000$ & $0.000^\dagger$ & 60 & 29 \\
\bottomrule
\multicolumn{5}{l}{$^\dagger$ Exact on training constraints by construction.}
\end{tabular}
\end{table}

\begin{figure}[t]
  \centering
  \includegraphics[width=\textwidth]{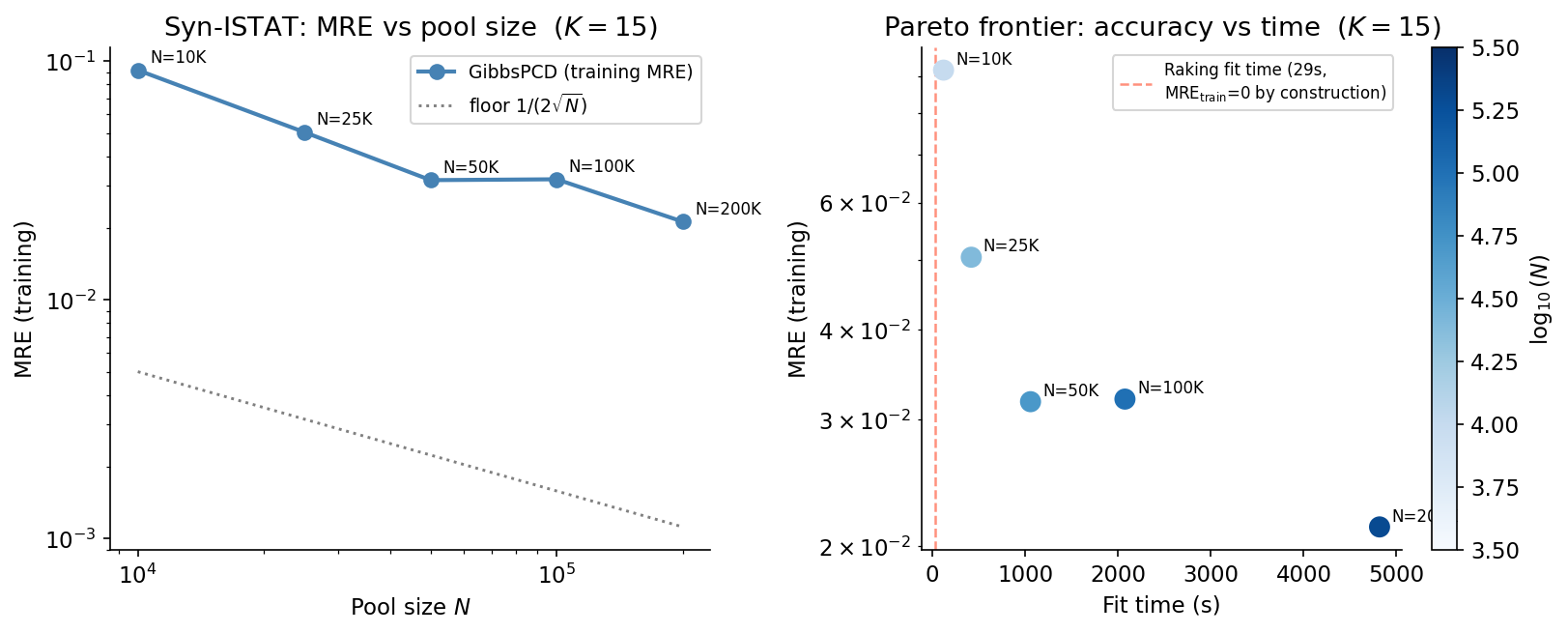}
\caption{%
  A-ISTAT-3 --- pool-size sensitivity on Syn-ISTAT
  ($K{=}15$, $|\cX|{\approx}1.7{\times}10^8$,
  31 constraints, $s{=}5$, $\eta{=}0.01$).
  \emph{Left}: training MRE vs.\ pool size $N$ (log--log);
  the dotted line is the theoretical variance floor
  $1/(2\sqrt{N})$; observed MRE lies above the floor
  at all $N$, confirming that mixing bias dominates
  over gradient variance in this regime.
  \emph{Right}: Pareto frontier of MRE vs.\ fit time;
  colour encodes $\log_{10}(N)$.
  The dashed vertical line marks the fit time of
  generalised raking ($29$~s); raking achieves
  $\MRE_{\text{train}}{=}0$ by algebraic construction
  ($\dagger$) and is not directly comparable to the
  stochastic Gibbs estimate --- see \cref{tab:heldout}
  for a meaningful accuracy comparison on held-out
  constraints.}
  \label{fig:a_istat3_pareto}
\end{figure}

\section{Discussion}
\label{sec:discussion}

\subsection{Extending the Competence Map}

\citet{pachet2026} establish a competence map showing
that MaxEnt overtakes raking when $K \gtrsim 28$ and
ternary interactions are present.
Their map is computed under exact enumeration, which
limits its empirical coverage to $K \le 40$.
GibbsPCDSolver extends this boundary: since the cost
per gradient step is $O(N \cdot K \cdot \bar{J} \cdot d_{\max})$
and is independent of $|\cX|$, the algorithm is in
principle applicable for any $K$.

Our experiments (\cref{sec:exp_scaling}) provide data
points for $K \in \{12, 20, 30, 40, 50\}$, confirming
that GibbsPCDSolver converges reliably across five decades
of $|\cX|$ and that the accuracy--cost trade-off is
controlled by $N$ rather than $K$.
At $K{=}40$, raking fails with $\MRE{=}0.129$, consistent
with the Pachet--Zucker threshold; at $K{=}50$,
$|\cX| \approx 6{\times}10^{18}$ and GibbsPCDSolver
remains the only MaxEnt-based method that can operate,
achieving $\MRE{=}0.013$.

Our experiments (\cref{sec:exp_scaling}) provide data
points for $K \in \{12, 20, 30, 40, 50\}$, confirming
that GibbsPCDSolver converges reliably across five decades
of $|\cX|$ and that the accuracy--cost trade-off is
controlled by $N$ rather than $K$.
At $K{=}40$, raking fails with $\MRE{=}0.129$, consistent
with the Pachet--Zucker threshold; at $K{=}50$,
$|\cX| \approx 6{\times}10^{18}$ and GibbsPCDSolver
remains the only MaxEnt-based method that can operate,
achieving $\MRE{=}0.013$.
This statement is scoped to the aggregate-only regime:
deep generative models can in principle operate at
large $K$, but require individual-level microdata for
training (\cref{sec:bg_deep}), which are often unavailable in many national census.
\subsection{When to Use Which Method}

The experimental results, combined with the diversity
analysis, suggest the following practical decision rule.
For the primary use case of this paper---synthetic
population generation for agent-based urban
simulation---the recommendation is unambiguous and
holds at any scale:

\begin{itemize}
  \item \textbf{Any $K$, downstream agent-based simulation}:
    prefer GibbsPCDSolver over raking regardless of $K$.
    Raking's $\Neff$ degradation is structural and
    grows with $K$: from $\Neff/N{=}7.1\%$ at $K{=}12$
    to $0.12\%$ at $K \ge 40$, making the synthesised
    population unsuitable for behavioural simulation
    regardless of training MRE.
    GibbsPCDSolver maintains $\Neff{=}N$ by construction
    at all scales.

  \item \textbf{$K \le 12$, $|\cX| \lesssim 10^7$}:
    use exact L-BFGS \citep{pachet2026}.
    Enumeration is feasible, the solution is exact,
    and runtime is under $3$~s.

  \item \textbf{$K \in [12, 30]$, mixed or ternary arities}:
    use GibbsPCDSolver with $N{=}50{,}000$--$100{,}000$,
    $s{=}3$--$5$, $\eta{=}0.01$.
    Exact enumeration is borderline or impossible;
    GibbsPCDSolver achieves $\MRE \approx 0.010$--$0.019$
    with runtime of $7$--$16$~minutes at $N{=}100{,}000$.

  \item \textbf{$K \ge 40$}:
    use GibbsPCDSolver with $N \ge 100{,}000$,
    $s{=}5$, and Numba acceleration.
    Raking fails to satisfy training constraints
    in this regime ($\MRE \ge 0.065$ in our experiments)
    and simultaneously degenerates to
    $\Neff \approx 120$ effective individuals.
\end{itemize}

\subsection{Hyperparameter Sensitivity and Calibration}

The main practical finding is that the optimal
learning rate $\eta$ and sweep count $s$ depend on
the complexity of the constraint graph, not on $K$
alone.
The key indicator is the presence of high-arity
constraints connecting high-degree nodes: when such
constraints are present, the stochastic gradient has
higher variance and slower mixing, requiring smaller
$\eta$ and more sweeps per step.
A principled diagnostic is to monitor the trajectory
of $\blam_t$ across iterations: sustained oscillation
of the sequence $(\blam_t)$ signals that $\eta$ is
too large, while monotone but slow descent of MRE
signals that $s$ is too small.
Note that this is a diagnostic on the \emph{dynamics}
of $\blam_t$, not on its distance from $\blam^*$:
as discussed in \cref{sec:validation}, the latter
is not a meaningful convergence metric in the presence
of unary constraints.
In practice, monitoring MRE directly is more reliable
than monitoring $\|\Delta\blam_t\|$ for both
diagnostics.
\subsection{Limitations}

\paragraph{Computational cost.}
Without Numba acceleration, a single run at
$K{=}15$, $N{=}100{,}000$, $s{=}5$ requires
approximately $23$~minutes on a standard CPU.
For production use in urban modelling pipelines,
the Numba-accelerated kernel (available in the
repository) reduces this by a factor of $5$--$15{\times}$.
Further acceleration via batched GPU evaluation
of the Gibbs conditionals is an open engineering
problem.

\paragraph{Approximate convergence.}
GibbsPCDSolver does not guarantee exact satisfaction
of the constraints: the MRE floor is
$\approx 1/(2\sqrt{N})$, and the adaptive stopping
rule may trigger before the true minimum is reached
(as observed for $N{=}50{,}000$ in A-ISTAT-3).
For applications where near-exact satisfaction of
training constraints is the sole objective---and
downstream sampling from the synthesised population
is not required---raking remains preferable.
For synthetic population generation intended for
agent-based simulation, however, this trade-off does
not arise: exact training MRE is irrelevant if the
resulting population degenerates to
${\sim}1{,}200$ effective archetypes regardless of $N$,
as demonstrated in \cref{sec:exp_diversity}.
In that context, an MRE floor of $1/(2\sqrt{N})$
is an acceptable cost for full demographic diversity.

\paragraph{Heterogeneous constraint universes.}
The ISTAT census system publishes marginals with
different reference populations (e.g.\ marital
status for age $\ge 15$, education for age $\ge 25$).
The current formulation assumes a single reference
population for all constraints.
Handling heterogeneous universes requires decomposing
each constraint as a conditional expectation
$\mathbb{E}_p[f_j \mid \bx \in U_j] = \alpha_j$,
where $U_j \subseteq \cX$ is the subpopulation to
which constraint $j$ applies.
This extension preserves the exponential family
structure and the Gibbs conditional derivation of
\cref{prop:gibbs}, but requires a modified gradient
estimator that restricts pool averaging to the
relevant subpopulation at each step.
We leave this extension to future work.

\section{Conclusion}
\label{sec:conclusion}

We have presented two contributions to scalable maximum
entropy population synthesis from aggregate census data.

\textbf{GibbsPCDSolver} replaces the intractable exact
expectation step of \citet{pachet2026} with a Persistent
Contrastive Divergence estimate from a persistent Gibbs
pool---a minimal modification (one line of Algorithm~1)
that removes the $|\cX|$ barrier limiting exact MaxEnt
to $K \lesssim 20$ attributes.
Scaling experiments across $K \in \{12, 20, 30, 40, 50\}$
confirm that GibbsPCDSolver maintains
$\MRE \in [0.010, 0.018]$ while $|\cX|$ grows from
$83{,}000$ to $5.8 \times 10^{18}$---eighteen orders
of magnitude---with runtime scaling as $O(K)$ rather
than $O(|\cX|)$.
At $K{=}40$, raking fails to satisfy training constraints
($\MRE{=}0.129$) and simultaneously degenerates to
$\Neff \approx 120$ effective individuals;
GibbsPCDSolver remains accurate and maintains full
pool diversity at both $K{=}40$ and $K{=}50$.

\textbf{Syn-ISTAT} is a $K{=}15$ Italian demographic
benchmark with analytically exact marginal targets
derived from ISTAT-inspired conditional probability
tables, a training/held-out split that tests
generalisation to third-order interactions, and
$|\cX| \approx 1.7 \times 10^8$ --- placing it
firmly in the non-enumerable regime.
On this benchmark, GibbsPCDSolver converges to
$\MRE{=}0.030$ on training constraints and outperforms
raking on the held-out ternary constraint involving
the highest-degree node in the constraint graph,
where the coherent exponential family parametrisation
exploits multi-hop structure that sequential IPF
rescaling cannot capture.

The most consequential finding, however, is not
about accuracy: GibbsPCDSolver produces populations
with effective sample size $86.8{\times}$ larger and
entropy $3.15$~nats higher than generalised raking.
Raking's weight collapse is structural and grows
monotonically with $K$---from $\Neff/N{=}7.1\%$ at
$K{=}12$ to $0.12\%$ at $K \ge 40$---rendering the
synthesised population unsuitable for agent-based
simulation regardless of training MRE.
For synthetic population generation intended for
urban digital twins and agent-based models,
GibbsPCDSolver is the method of choice at any
scale: not because it achieves lower training error,
but because it produces the demographic diversity
that realistic emergent dynamics require.

\section*{Generative AI Disclosure}
The author used Claude (Anthropic) as an AI assistant during
the preparation of this manuscript, primarily for LaTeX editing,
consistency checking across sections, and refinement of
mathematical notation.
All scientific content, experimental results, and conclusions
are the sole responsibility of the author.

\section*{Acknowledgements}
The author thanks Fran\c{c}ois Pachet (ImagineAllThePeople)
for stimulating discussions on maximum entropy population
synthesis and for making available a preprint of
\citet{pachet2026} prior to publication.


\bibliographystyle{plainnat}
\bibliography{references}

\appendix
\section{Syn-ISTAT: Full Benchmark Specification}
\label{app:synistat}

Syn-ISTAT is an ISTAT-inspired synthetic benchmark: its conditional
probability tables reflect known structural patterns of the Italian
demographic system (gender gap in labour market participation,
age-dependent income profiles, urban/rural transport split) but are
constructed synthetically rather than estimated from official microdata.
This guarantees full ground-truth control and exact analytical computation
of all marginal targets.

\subsection{Attribute Table}
\label{app:attributes}

\begin{table}[h]
\centering
\caption{The 15 categorical attributes of Syn-ISTAT.
  Anchor variables (fixed prior marginals) are marked $\dagger$.
  $|\cX| = \prod_k d_k \approx 1.7 \times 10^8$.}
\label{tab:app_variables}
\small
\begin{tabular}{@{}clcl@{}}
\toprule
\textbf{Index} & \textbf{Attribute} & $d_k$ & \textbf{Domain} \\
\midrule
1  & sex$^\dagger$           & 2 & F, M \\
2  & age$^\dagger$           & 4 & 0--24, 25--49, 50--66, 67+ \\
3  & marital status          & 5 & NeverMarried, Married, Separated, Divorced, Widowed \\
4  & education$^\dagger$     & 4 & LessThanHS, HighSchool, SomeCollege, Bachelor+ \\
5  & employment              & 3 & Employed, Unemployed, NotInLF \\
6  & income                  & 5 & None, Low, Medium, UpperMedium, High \\
7  & household size$^\dagger$& 4 & 1, 2, 3--4, 5+ \\
8  & has children            & 2 & No, Yes \\
9  & residence area$^\dagger$& 3 & Urban, Suburban, Rural \\
10 & car access$^\dagger$    & 3 & NoCar, SharedCar, OwnCar \\
11 & main transport          & 5 & Car, PublicTransport, Walking, Bike, Mixed \\
12 & commute time            & 4 & None, $<$15 min, 15--45 min, 45+ min \\
13 & diet type               & 3 & Omnivore, ReducedMeat, VegetarianLike \\
14 & alcohol use             & 4 & Never, Occasional, Weekly, Frequent \\
15 & physical activity       & 4 & Sedentary, Low, Moderate, High \\
\bottomrule
\end{tabular}
\end{table}

\subsection{Unary Marginals}
\label{app:unary}

Anchor variables (italicised) are drawn from fixed marginals;
non-anchor marginals are derived as implied marginals from the
binary/ternary tables by marginalisation over the conditioning
variable (see \cref{sec:synistat}).
When a variable appears in more than one table, implied marginals
are averaged and renormalised; the maximum pairwise discrepancy
across sources is $0.025$ (employment), which is within acceptable
tolerance for the MaxEnt solver.

\begin{table}[h]
\centering
\caption{Unary marginal distributions for all 15 attributes.
  Anchor distributions are shown in \textit{italics}.}
\label{tab:app_unary}
\small
\begin{tabular}{@{}llr@{}}
\toprule
\textbf{Attribute} & \textbf{Category} & $P$ \\
\midrule
\multirow{2}{*}{\textit{sex}}
  & \textit{F}  & \textit{0.510} \\
  & \textit{M}  & \textit{0.490} \\[3pt]
\multirow{4}{*}{\textit{age}}
  & \textit{0--24}  & \textit{0.240} \\
  & \textit{25--49} & \textit{0.340} \\
  & \textit{50--66} & \textit{0.240} \\
  & \textit{67+}    & \textit{0.180} \\[3pt]
\multirow{5}{*}{marital}
  & NeverMarried & 0.346 \\
  & Married      & 0.426 \\
  & Separated    & 0.038 \\
  & Divorced     & 0.112 \\
  & Widowed      & 0.078 \\[3pt]
\multirow{4}{*}{\textit{education}}
  & \textit{LessThanHS}  & \textit{0.140} \\
  & \textit{HighSchool}  & \textit{0.330} \\
  & \textit{SomeCollege} & \textit{0.280} \\
  & \textit{Bachelor+}   & \textit{0.250} \\[3pt]
\multirow{3}{*}{employment}
  & Employed   & 0.529 \\
  & Unemployed & 0.078 \\
  & NotInLF    & 0.393 \\[3pt]
\multirow{5}{*}{income}
  & None        & 0.127 \\
  & Low         & 0.251 \\
  & Medium      & 0.315 \\
  & UpperMedium & 0.205 \\
  & High        & 0.103 \\[3pt]
\multirow{4}{*}{\textit{household size}}
  & \textit{1}    & \textit{0.280} \\
  & \textit{2}    & \textit{0.340} \\
  & \textit{3--4} & \textit{0.300} \\
  & \textit{5+}   & \textit{0.080} \\[3pt]
\multirow{2}{*}{has children}
  & No  & 0.621 \\
  & Yes & 0.379 \\[3pt]
\multirow{3}{*}{\textit{residence area}}
  & \textit{Urban}    & \textit{0.420} \\
  & \textit{Suburban} & \textit{0.360} \\
  & \textit{Rural}    & \textit{0.220} \\[3pt]
\multirow{3}{*}{\textit{car access}}
  & \textit{NoCar}     & \textit{0.180} \\
  & \textit{SharedCar} & \textit{0.270} \\
  & \textit{OwnCar}    & \textit{0.550} \\[3pt]
\multirow{5}{*}{main transport}
  & Car             & 0.489 \\
  & PublicTransport & 0.211 \\
  & Walking         & 0.103 \\
  & Bike            & 0.065 \\
  & Mixed           & 0.132 \\[3pt]
\multirow{4}{*}{commute time}
  & None   & 0.371 \\
  & $<$15  & 0.192 \\
  & 15--45 & 0.319 \\
  & 45+    & 0.119 \\[3pt]
\multirow{3}{*}{diet type}
  & Omnivore       & 0.759 \\
  & ReducedMeat    & 0.178 \\
  & VegetarianLike & 0.063 \\[3pt]
\multirow{4}{*}{alcohol use}
  & Never      & 0.266 \\
  & Occasional & 0.336 \\
  & Weekly     & 0.280 \\
  & Frequent   & 0.118 \\[3pt]
\multirow{4}{*}{physical activity}
  & Sedentary & 0.236 \\
  & Low       & 0.290 \\
  & Moderate  & 0.298 \\
  & High      & 0.175 \\
\bottomrule
\end{tabular}
\end{table}

\subsection{Binary Contingency Tables}
\label{app:binary}

All 13 binary tables specify $P(B \mid A)$; each row sums to~1.
The conditioning variable $A$ is always an anchor or a variable
with a well-established prior; $B$ is the conditioned attribute.

\begin{table}[h]
\centering
\caption{Index of the 13 binary contingency tables.}
\label{tab:app_binary_index}
\small
\begin{tabular}{@{}llll@{}}
\toprule
\textbf{\#} & \textbf{Table} & \textbf{Conditioning} & \textbf{Conditioned} \\
\midrule
B1  & \texttt{age\_marital}          & age            & marital status \\
B2  & \texttt{age\_employment}       & age            & employment \\
B3  & \texttt{education\_employment} & education      & employment \\
B4  & \texttt{employment\_income}    & employment     & income \\
B5  & \texttt{household\_children}   & household size & has children \\
B6  & \texttt{area\_transport}       & residence area & main transport \\
B7  & \texttt{car\_transport}        & car access     & main transport \\
B8  & \texttt{age\_alcohol}          & age            & alcohol use \\
B9  & \texttt{sex\_employment}       & sex            & employment \\
B10 & \texttt{sex\_income}           & sex            & income \\
B11 & \texttt{employment\_commute}   & employment     & commute time \\
B12 & \texttt{age\_diet}             & age            & diet type \\
B13 & \texttt{age\_activity}         & age            & physical activity \\
\bottomrule
\end{tabular}
\end{table}

\paragraph{B1.} $P(\text{marital} \mid \text{age})$

\begin{tabular}{@{}lrrrrr@{}}
\toprule
Age & NeverMarr. & Married & Separated & Divorced & Widowed \\
\midrule
0--24  & 0.880 & 0.100 & 0.005 & 0.010 & 0.005 \\
25--49 & 0.270 & 0.550 & 0.050 & 0.110 & 0.020 \\
50--66 & 0.120 & 0.580 & 0.060 & 0.180 & 0.060 \\
67+    & 0.080 & 0.420 & 0.030 & 0.160 & 0.310 \\
\bottomrule
\end{tabular}

\medskip
\paragraph{B2.} $P(\text{employment} \mid \text{age})$\quad
\textbf{B3.} $P(\text{employment} \mid \text{education})$

\begin{tabular}{@{}lrrr@{}}
\toprule
Age & Employed & Unemployed & NotInLF \\
\midrule
0--24  & 0.380 & 0.120 & 0.500 \\
25--49 & 0.740 & 0.080 & 0.180 \\
50--66 & 0.630 & 0.070 & 0.300 \\
67+    & 0.080 & 0.020 & 0.900 \\
\bottomrule
\end{tabular}
\quad
\begin{tabular}{@{}lrrr@{}}
\toprule
Education & Employed & Unemployed & NotInLF \\
\midrule
LessThanHS  & 0.390 & 0.110 & 0.500 \\
HighSchool  & 0.560 & 0.090 & 0.350 \\
SomeCollege & 0.610 & 0.080 & 0.310 \\
Bachelor+   & 0.710 & 0.050 & 0.240 \\
\bottomrule
\end{tabular}

\medskip
\paragraph{B4.} $P(\text{income} \mid \text{employment})$

\begin{tabular}{@{}lrrrrr@{}}
\toprule
Employment & None & Low & Medium & UpperMedium & High \\
\midrule
Employed   & 0.020 & 0.180 & 0.380 & 0.280 & 0.140 \\
Unemployed & 0.240 & 0.460 & 0.230 & 0.060 & 0.010 \\
NotInLF    & 0.250 & 0.310 & 0.250 & 0.130 & 0.060 \\
\bottomrule
\end{tabular}

\medskip
\paragraph{B5.} $P(\text{has children} \mid \text{household size})$\quad
\textbf{B8.} $P(\text{alcohol use} \mid \text{age})$

\begin{tabular}{@{}lrr@{}}
\toprule
Hh.\ size & No & Yes \\
\midrule
1    & 0.980 & 0.020 \\
2    & 0.730 & 0.270 \\
3--4 & 0.290 & 0.710 \\
5+   & 0.140 & 0.860 \\
\bottomrule
\end{tabular}
\quad
\begin{tabular}{@{}lrrrr@{}}
\toprule
Age & Never & Occasional & Weekly & Frequent \\
\midrule
0--24  & 0.370 & 0.360 & 0.200 & 0.070 \\
25--49 & 0.180 & 0.310 & 0.350 & 0.160 \\
50--66 & 0.200 & 0.330 & 0.320 & 0.150 \\
67+    & 0.380 & 0.360 & 0.200 & 0.060 \\
\bottomrule
\end{tabular}

\medskip
\paragraph{B6.} $P(\text{main transport} \mid \text{residence area})$\quad
\textbf{B7.} $P(\text{main transport} \mid \text{car access})$

\begin{tabular}{@{}lrrrrr@{}}
\toprule
Area & Car & PublicTr. & Walking & Bike & Mixed \\
\midrule
Urban    & 0.290 & 0.340 & 0.140 & 0.090 & 0.140 \\
Suburban & 0.580 & 0.160 & 0.080 & 0.050 & 0.130 \\
Rural    & 0.720 & 0.050 & 0.070 & 0.040 & 0.120 \\
\bottomrule
\end{tabular}
\quad
\begin{tabular}{@{}lrrrrr@{}}
\toprule
Car acc. & Car & PublicTr. & Walking & Bike & Mixed \\
\midrule
NoCar     & 0.020 & 0.420 & 0.240 & 0.120 & 0.200 \\
SharedCar & 0.410 & 0.200 & 0.080 & 0.050 & 0.260 \\
OwnCar    & 0.680 & 0.100 & 0.050 & 0.040 & 0.130 \\
\bottomrule
\end{tabular}

\medskip
\paragraph{B9.} $P(\text{employment} \mid \text{sex})$\quad
\textbf{B10.} $P(\text{income} \mid \text{sex})$

\begin{tabular}{@{}lrrr@{}}
\toprule
Sex & Employed & Unemployed & NotInLF \\
\midrule
F & 0.480 & 0.090 & 0.430 \\
M & 0.620 & 0.070 & 0.310 \\
\bottomrule
\end{tabular}
\quad
\begin{tabular}{@{}lrrrrr@{}}
\toprule
Sex & None & Low & Medium & UpperMedium & High \\
\midrule
F & 0.170 & 0.280 & 0.310 & 0.160 & 0.080 \\
M & 0.070 & 0.210 & 0.320 & 0.260 & 0.140 \\
\bottomrule
\end{tabular}

\medskip
\paragraph{B11.} $P(\text{commute time} \mid \text{employment})$

\begin{tabular}{@{}lrrrr@{}}
\toprule
Employment & None & $<$15 min & 15--45 min & 45+ min \\
\midrule
Employed   & 0.050 & 0.220 & 0.520 & 0.210 \\
Unemployed & 0.600 & 0.180 & 0.170 & 0.050 \\
NotInLF    & 0.720 & 0.160 & 0.100 & 0.020 \\
\bottomrule
\end{tabular}

\medskip
\paragraph{B12.} $P(\text{diet type} \mid \text{age})$\quad
\textbf{B13.} $P(\text{physical activity} \mid \text{age})$

\begin{tabular}{@{}lrrr@{}}
\toprule
Age & Omnivore & ReducedMeat & VegetarianLike \\
\midrule
0--24  & 0.680 & 0.220 & 0.100 \\
25--49 & 0.720 & 0.210 & 0.070 \\
50--66 & 0.810 & 0.150 & 0.040 \\
67+    & 0.870 & 0.100 & 0.030 \\
\bottomrule
\end{tabular}
\quad
\begin{tabular}{@{}lrrrr@{}}
\toprule
Age & Sedentary & Low & Moderate & High \\
\midrule
0--24  & 0.120 & 0.210 & 0.360 & 0.310 \\
25--49 & 0.200 & 0.270 & 0.330 & 0.200 \\
50--66 & 0.280 & 0.340 & 0.280 & 0.100 \\
67+    & 0.400 & 0.370 & 0.180 & 0.050 \\
\bottomrule
\end{tabular}

\subsection{Ternary Contingency Tables}
\label{app:ternary}

Three ternary constraints capture higher-order interactions that
cannot be factored into pairwise terms without significant
distortion.
Each ternary table $P(C \mid A, B)$ nominally supersedes the
corresponding binary tables $P(C \mid A)$ and $P(C \mid B)$,
but both are retained in the constraint set to maintain a richer
constraint graph topology.

\paragraph{T1.} $P(\text{income} \mid \text{education},\, \text{employment})$

\noindent Captures the joint effect of human capital and labour
market status on income: the education premium on earnings is
visible only when conditioning on employment status jointly.

\begin{table}[h]
\centering
\small
\begin{tabular}{@{}llrrrrr@{}}
\toprule
Education & Employment & None & Low & Medium & UpperMedium & High \\
\midrule
\multirow{3}{*}{LessThanHS}
  & Employed   & 0.030 & 0.320 & 0.400 & 0.180 & 0.070 \\
  & Unemployed & 0.300 & 0.480 & 0.180 & 0.030 & 0.010 \\
  & NotInLF    & 0.300 & 0.380 & 0.220 & 0.080 & 0.020 \\[3pt]
\multirow{3}{*}{HighSchool}
  & Employed   & 0.020 & 0.210 & 0.430 & 0.240 & 0.100 \\
  & Unemployed & 0.240 & 0.470 & 0.220 & 0.060 & 0.010 \\
  & NotInLF    & 0.240 & 0.330 & 0.260 & 0.120 & 0.050 \\[3pt]
\multirow{3}{*}{SomeCollege}
  & Employed   & 0.010 & 0.160 & 0.390 & 0.290 & 0.150 \\
  & Unemployed & 0.220 & 0.440 & 0.250 & 0.070 & 0.020 \\
  & NotInLF    & 0.220 & 0.290 & 0.270 & 0.150 & 0.070 \\[3pt]
\multirow{3}{*}{Bachelor+}
  & Employed   & 0.010 & 0.080 & 0.280 & 0.370 & 0.260 \\
  & Unemployed & 0.180 & 0.390 & 0.280 & 0.110 & 0.040 \\
  & NotInLF    & 0.180 & 0.210 & 0.250 & 0.200 & 0.160 \\
\bottomrule
\end{tabular}
\end{table}

\paragraph{T2.} $P(\text{main transport} \mid \text{residence area},\, \text{car access})$

\noindent Captures the interaction between residential context and
vehicle availability in modal choice: neither binary table B6 nor
B7 alone can reproduce the Rural$\times$OwnCar dominance of
private car ($0.840$).

\begin{table}[h]
\centering
\small
\begin{tabular}{@{}llrrrrr@{}}
\toprule
Area & Car access & Car & PublicTr. & Walking & Bike & Mixed \\
\midrule
\multirow{3}{*}{Urban}
  & NoCar     & 0.010 & 0.480 & 0.250 & 0.100 & 0.160 \\
  & SharedCar & 0.330 & 0.240 & 0.100 & 0.070 & 0.260 \\
  & OwnCar    & 0.560 & 0.150 & 0.080 & 0.050 & 0.160 \\[3pt]
\multirow{3}{*}{Suburban}
  & NoCar     & 0.030 & 0.430 & 0.220 & 0.100 & 0.220 \\
  & SharedCar & 0.460 & 0.180 & 0.070 & 0.040 & 0.250 \\
  & OwnCar    & 0.720 & 0.080 & 0.040 & 0.030 & 0.130 \\[3pt]
\multirow{3}{*}{Rural}
  & NoCar     & 0.040 & 0.220 & 0.300 & 0.100 & 0.340 \\
  & SharedCar & 0.580 & 0.080 & 0.070 & 0.030 & 0.240 \\
  & OwnCar    & 0.840 & 0.030 & 0.030 & 0.020 & 0.080 \\
\bottomrule
\end{tabular}
\end{table}

\paragraph{T3.} $P(\text{employment} \mid \text{sex},\, \text{age})$

\noindent Captures the age-dependent Italian gender gap in labour
market participation: the gap is moderate in the youngest cohort
(high \texttt{NotInLF} rates for both sexes due to education),
maximal in the prime working-age group 25--49 ($87\%$ male
vs.\ $61\%$ female employment rate), and effectively disappears
post-retirement.
Neither binary table B2 nor B9 can represent this interaction.
\emph{Consistency check}: marginalising over sex
($P(\text{F}){=}0.51$, $P(\text{M}){=}0.49$) recovers
$P(\text{employed} \mid \text{age})$ to within $\pm 0.01$ of
B2; marginalising over age recovers $P(\text{employed} \mid
\text{sex})$ to within $\pm 0.06$ of B9.

\begin{table}[h]
\centering
\small
\begin{tabular}{@{}llrrr@{}}
\toprule
Sex & Age & Employed & Unemployed & NotInLF \\
\midrule
\multirow{4}{*}{F}
  & 0--24  & 0.330 & 0.140 & 0.530 \\
  & 25--49 & 0.610 & 0.100 & 0.290 \\
  & 50--66 & 0.520 & 0.070 & 0.410 \\
  & 67+    & 0.050 & 0.010 & 0.940 \\[3pt]
\multirow{4}{*}{M}
  & 0--24  & 0.430 & 0.100 & 0.470 \\
  & 25--49 & 0.870 & 0.060 & 0.070 \\
  & 50--66 & 0.740 & 0.070 & 0.190 \\
  & 67+    & 0.110 & 0.020 & 0.870 \\
\bottomrule
\end{tabular}
\end{table}

\subsection{Constraint Graph}
\label{app:graph}

With the 13 binary and 3 ternary constraints, all 15 variables
are connected; no variable is isolated from the graph.
High-degree nodes are \textbf{age} (degree~7: marital, employment,
diet, alcohol, activity, plus ternaries T1 and T3) and
\textbf{employment} (degree~5: age, education, income, commute,
sex, plus T1 and T3).
Degree-1 nodes (connected to exactly one other variable) are:
marital (age only), has children (household size only),
commute time (employment only), diet type (age only),
physical activity (age only), and alcohol use (age only).
This topology motivates the held-out experiment in
\cref{sec:exp_heldout}: employment's high degree makes T3
($\text{sex} \times \text{age} \to \text{employment}$)
the most demanding held-out constraint, and the one on which
GibbsPCDSolver outperforms raking.

\end{document}